  \providecommand\BibTeX{{%
    \normalfont B\kern-0.5em{\scshape i\kern-0.25em b}\kern-0.8em\TeX}}}
\newtheorem{theorem}{Theorem}
\newtheorem{lemma}{Lemma}
\newtheorem{proposition}{Proposition}
\newtheorem{assumption}{Assumption}
\title{Large-Scale Subspace Clustering via k-Factorization}
\author{Jicong Fan}
\email{fanjicong@cuhk.edu.cn}
\affiliation{%
  \institution{The Chinese University of Hong Kong (Shenzhen) and Shenzhen Research Institute of Big Data}
  %\streetaddress{P.O. Box 1212}
  \city{Shenzhen}
  %\state{Ohio}
  \country{China}
  %\postcode{43017-6221}
}
\begin{document}
\fancyhead{}

\begin{abstract}
Subspace clustering (SC) aims to cluster data lying in a union of low-dimensional subspaces. Usually, SC learns an affinity matrix and then performs spectral clustering. Both steps suffer from high time and space complexity, which leads to difficulty in clustering large datasets. This paper presents a method called k-Factorization Subspace Clustering (k-FSC) for large-scale subspace clustering. K-FSC directly factorizes the data  into k groups via pursuing structured sparsity in the matrix factorization model. Thus, k-FSC avoids learning affinity matrix and performing eigenvalue decomposition, and has low (linear) time and space complexity on large datasets. This paper proves the effectiveness of the k-FSC model theoretically. An efficient algorithm with convergence guarantee is proposed to solve the optimization of k-FSC. In addition, k-FSC is able to handle sparse noise, outliers, and missing data, which are pervasive in real applications. This paper also provides online extension and out-of-sample extension for k-FSC to handle streaming data and cluster arbitrarily large datasets. Extensive experiments on large-scale real datasets show that k-FSC and its extensions outperform state-of-the-art methods of subspace clustering.
\end{abstract}

\begin{CCSXML}
<ccs2012>
<concept>
<concept_id>10010147.10010257.10010258.10010260.10003697</concept_id>
<concept_desc>Computing methodologies~Cluster analysis</concept_desc>
<concept_significance>500</concept_significance>
</concept>
<concept>
<concept_id>10002951.10003227.10003351.10003444</concept_id>
<concept_desc>Information systems~Clustering</concept_desc>
<concept_significance>500</concept_significance>
</concept>
</ccs2012>
\end{CCSXML}

\ccsdesc[500]{Computing methodologies~Cluster analysis}
\ccsdesc[500]{Information systems~Clustering}

%%
%% Keywords. The author(s) should pick words that accurately describe
%% the work being presented. Separate the keywords with commas.
\keywords{Large-scale clustering; Subspace clustering; Spectral clustering; Matrix factorization}

%%
%% This command processes the author and affiliation and title
%% information and builds the first part of the formatted document.
\maketitle

\section{Introduction}
Subspace clustering \cite{sc2004, von2007tutorial,SC_tutorial2011,sim2013survey} assumes that the data points lie in a union of low-dimensional subspaces and segments the data into different groups corresponding to different subspaces. Classical subspace clustering algorithms such as sparse subspace clustering (SSC) \cite{SSC_PAMIN_2013}, low-rank representation (LRR) \cite{LRR_PAMI_2013},  and their variants \cite{li2015structured,lu2018subspace,zhang2019neural,9366807} are based on the self-expressive property \cite{SSC_PAMIN_2013}.
%: each data
%point can be efficiently represented as a linear or affine combination of other points lying in a common subspace. 
These algorithms\footnote{This paper focuses on a relatively narrow definition of and approach to subspace clustering. More general problems and methods of subspace clustering can be found in \cite{sim2013survey}. } have two major steps. First, they learn an affinity matrix, of which the time complexity is $O(n^2)$ or even $O(n^3)$ in every iteration of the optimization. The second step is to perform spectral clustering on the affinity matrix, of which the eigenvalue decomposition of the Laplacian matrix has polynomial time complexity. As a result, these algorithms are not applicable to large-scale data \cite{chen2010parallel} owing to their high time and space complexity.

Recently, a few fast subspace or spectral clustering algorithms were proposed for large-scale datasets \cite{fowlkes2004spectral,chen2011large,peng2013scalable,wang2014exact,you2016oracle,you2016scalable,li2017large,9292470,FAN201839,wu2018scalable,chen2020stochastic}. For instance, \cite{chen2011large} proposed a landmark-based spectral clustering algorithm: 1) produces a few landmark points using k-means; 2) computes the features of all data points via exploiting eigenvectors of the affinity matrix obtained from the landmark points; 3) performs k-means on the features to get the clusters. In \cite{peng2013scalable}, the authors treated large-scale subspace clustering as an out-of-sample extension problem of SSC on a few selected  landmark data points, in which the clustering problem on the remainders is solved via classification. 
In \cite{chen2018spectral}, the authors proposed an algorithm to directly optimize the normalized cut model and used an anchor-based strategy to extend the algorithm to large-scale data.
%In \cite{9292470}, the authors proposed a nonlinear regression coding algorithm for large-scale subspace clustering. They suggested learning a nonlinear map (two-layer neural network) from a few selected data points to their self-expressive coefficients; then they performed the nonlinear map on the whole dataset to obtain the complete self-expressive coefficients. 
%In \cite{you2016oracle}, the authors utilized elastic net with active set learning to cluster large-scale datasets. 
In \cite{matsushima2019selective}, a method called S$^5$C was proposed as a scalable variant of SSC. The method first selects a small subset of the data points by performing sparse representation iteratively; then it performs sparse representation again for all data points using the selected samples and constructs an affinity matrix for all data points; lastly, it uses orthogonal iteration to compute the required eigenvectors for clustering.
%In \cite{chen2020stochastic}, by randomly dropping columns in the self-expressive model, the authors presented a stochastic SSC for large datasets. %The main idea is . The method was formulated as a consensus optimization problem solved by an alternating minimization consisting of a set of damped orthogonal matching pursuits and an average operation.

Although the aforementioned large-scale subspace clustering methods have achieved considerable success in numerous applications, they still have a few limitations. First, those methods often start with a few samples of the dataset and then expand the representation coefficients to all data. Thus the clustering cannot effectively exploit the whole information of the dataset, which may reduce the clustering accuracy. Second, those methods usually have to store the affinity matrix and compute eigenvectors, which prevent the application to extremely large datasets. Finally, those methods are not effective in handling sparse noise, outliers, missing data, and streaming data, which are pervasive in real applications. 

To handle the aforementioned problems, this paper presents a method called k-Factorization Subspace Clustering (k-FSC), which directly factorizes the data matrix into $k$ groups corresponding to $k$ subspaces. The contributions of this work are as follows. 

\textbf{(1)}  The paper proposes a group-sparse factorization model for subspace clustering. The method k-FSC does not need to learn an affinity matrix and perform spectral clustering. The time and space complexity of the method are linear with the number of data points.

\textbf{(2)} The paper provides theoretical guarantees for the effectiveness of the k-FSC model. 

\textbf{(3)} The paper provides an efficient algorithm with convergence guarantee for the nonconvex nonsmooth optimization of k-FSC.

\textbf{(4)} The paper provides online extension and out-of-sample extension for k-FSC to handle arbitrarily large datasets.

\textbf{(5)} The paper extends k-FSC to robust clustering that is able to handle sparse noise, outliers, and missing data.

Extensive experiments show that the proposed methods\footnote{The MATLAB codes of the proposed methods are available at \url{https://github.com/jicongfan/K-Factorization-Subspace-Clustering}.} outperform the state-of-the-art methods of large-scale clustering. 

The remainder of this paper is structured as follows. Section \ref{sec_2} elaborates the proposed k-FSC method. Section \ref{sec_3} is the optimization. Section \ref{sec_4} provides a few extensions of k-FSC. Section \ref{sec_5} discusses the connection with previous work. Section \ref{sec_exp} details the experiments. Section \ref{sec_con} presents the conclusion of this paper.

\section{k-Factorization Subspace Clustering} \label{sec_2}

Throughout the paper, we use the following notations. \ $\bm{x}$: column vector. $\bm{X}$: matrix. $[k]$: $\lbrace 1,2,\ldots,k\rbrace$.
\ $\bm{X}_{:j}$: column-$j$ of $\bm{X}$. \ $\bm{X}_j$: matrix with index $j$. 
\ $[\bm{X},\bm{Y}]$: {column-wise stack}. \ $[\bm{X};\bm{Y}]$: { row-wise stack}.
\ $\bm{X}^{(j)} $: index-$j$ sub-matrix of $\bm{X}$.
\ $\Vert \cdot\Vert$: Euclidean norm of vector.
\ $\Vert \cdot\Vert_2$: spectral norm of matrix.
\ $\Vert \cdot\Vert_F$: Frobenius norm of matrix.
\ $\Vert \cdot\Vert_1$: $\ell_1$ norm of vector or matrix.
\ $\vert \cdot\vert$: absolute value of scalar, vector, or matrix.
\ $\mathbb{1}(f)$: $1$ if $f$ is true; $0$ if $f$ is false.
%\begin{table}[h]
%\centering
%\caption{Notations}\label{tab.notation}
%\begin{tabular}{l|l|ll}
%\hline 
%$\bm{x}$ & column vector & $\bm{X}$ & matrix \\ \hline
%$\mathcal{X}$ & dataset & $[k]$ & $\lbrace 1,2,\ldots,k\rbrace$ \\ \hline
%$\bm{X}_{:j}$ & column-$j$ of $\bm{X}$ & $\bm{X}_j$ & matrix with index $j$ \\ \hline
%$[\bm{X},\bm{Y}]$ & {column-wise stack} &$[\bm{X};\bm{Y}]$ & { row-wise stack}\\ \hline
%$\bm{X}^{(j)} $ & \multicolumn{3}{l}{index-$j$ sub-matrix of $\bm{X}$} \\ \hline
%$\Vert \cdot\Vert$ & \multicolumn{3}{l}{Euclidean norm of vector} \\ \hline
%$\Vert \cdot\Vert_2$ & \multicolumn{3}{l}{Spectral norm of matrix} \\ \hline
%$\Vert \cdot\Vert_F$ & \multicolumn{3}{l}{Frobenius norm of matrix} \\ \hline
%$\Vert \cdot\Vert_1$ & \multicolumn{3}{l}{$\ell_1$ norm of vector or matrix} \\ \hline
%$\vert \cdot\vert$ & \multicolumn{3}{l}{absolute value of scalar, vector, or matrix} \\ \hline
%$\mathbb{1}(f)$ & \multicolumn{3}{l}{$1$ if $f$ is true; $0$ if $f$ is false}\\ \hline
%\end{tabular}
%\end{table}

We first give the following assumption.
\begin{assumption}\label{asump_X}
The columns of data matrix $\bm{X}\in\mathbb{R}^{m\times n}$ are drawn from a union of  $k$ low-dimensional subspaces: $\bm{x}_i=\bm{U}^{(j)}\bm{z}_i$ if $\bm{x}_i\in\mathcal{S}_j$, where ${\bm{U}^{(j)}}\in\mathbb{R}^{m\times d_j}$ are the orthogonal bases of  $\mathcal{S}_j$,  $j\in [k]$, and $i\in[n]$. The number of data points lying in $\mathcal{S}_j$ is $n_j$ and $d_j<\min\lbrace m, n_j\rbrace$.
\end{assumption}
Our goal is to perform subspace clustering on $\bm{X}$ given by Assumption \ref{asump_X}.
In contrast to conventional subspace clustering methods, we in this paper propose to directly factorize $\bm{X}$ into $k$ groups corresponding to $k$ subspaces. Intuitively, we want to solve
\begin{equation}\label{eq.ksc_0}
\begin{aligned}
\mathop{\textup{minimize}}_{\bm{P}, \bm{U},\bm{Z}}&\ \ \Vert \bm{X}\bm{P}-\bm{U}\bm{Z}\Vert_F^2,\\
\end{aligned}
\end{equation}
where $\bm{P}\in\mathbb{R}^{n\times n}$ is a permutation matrix,  $\bm{U}=[\bm{U}^{(1)},\ldots,\bm{U}^{(k)}]\in\mathbb{R}^{m\times \sum _{j=1}^kd_j}$ are the subspace bases, $\bm{U}^{(j)}\in\mathbb{R}^{m\times d_j}$,  and ${\bm{U}^{(j)}}^\top \bm{U}^{(j)}=\bm{I}_{d_j}$, for $j\in [k]$. $\bm{Z}$ is a block diagonal matrix, i.e.
$$ \bm{Z}=\left[\begin{matrix}
\bm{Z}^{(1)} & \ldots &\bm{0}\\
\vdots & \ddots &\vdots\\
\bm{0} & \ldots &\bm{Z}^{(k)}
\end{matrix}
\right]\in\mathbb{R}^{\sum_{j=1}^d d_j\times n},
$$
where $\bm{Z}^{(j)}\in\mathbb{R}^{d_j\times n_j}$, for $j\in [k]$. 	The minimum of the objective function in \eqref{eq.ksc_0} is 0. In fact, it is difficult to solve \eqref{eq.ksc_0} directly because of the presence of $\bm{P}$. 

Notice that in \eqref{eq.ksc_0} we can replace $\bm{U}$ with $\bm{D}\in\mathbb{S}_D$, where 
\begin{align*}
\mathbb{S}_D:=&\lbrace [\bm{D}^{(1)},\ldots,\bm{D}^{(k)}]\in\mathbb{R}^{m\times \sum _{j=1}^kd_j}: \bm{D}^{(j)}\in\mathbb{R}^{m\times d_j},\\
&\ \Vert \bm{D}^{(j)}_{:i}\Vert\leq 1, \forall\ i\in[d_j], j\in[k]\rbrace.
\end{align*}
Meanwhile, we replace $\bm{X}\bm{P}$ with $\bm{X}$ and let $\bm{C}=\bm{Z}\bm{P}^{-1}$ $\in\mathbb{S}_C$ where
\begin{align*}
\mathbb{S}_C&:=\lbrace [\bm{C}^{(1)};\ldots;\bm{C}^{(k)}]\in\mathbb{R}^{\sum_{j=1}^d d_j\times n}: \bm{C}^{(j)}\in\mathbb{R}^{d_j\times n},\\
&\ \sum_{i=1}^n\mathbb{1}(\bm{C}_{:i}^{(j)}\neq \bm{0})=n_j, \forall\ j\in [k];\sum_{j=1}^k\mathbb{1}(\bm{C}_{:i}^{(j)}\neq \bm{0})=1, \forall\ i\in [n]\rbrace.
\end{align*}
Namely, $\bm{C}$ is a sparse matrix, $\bm{C}^{(j)}$ has $n_j$ nonzero columns, and the number of nonzero groups in each column of $\bm{C}$ is 1.
Thus we see that we actually don't need to determine $\bm{P}$ explicitly. Instead, we merge $\bm{P}$ into $\bm{C}$, which yields the following problem
\begin{equation}\label{eq.ksc_1}
\begin{aligned}
\mathop{\textup{minimize}}_{\bm{D}\in\mathbb{S}_D,\bm{C}\in\mathbb{S}_C}&\ \ \Vert \bm{X}-\bm{D}\bm{C}\Vert_F^2.
\end{aligned}
\end{equation}
%Here $\bm{D}$ is a set of dictionaries and $\bm{C}$ is a sparse coefficients matrix. 
%Hence, \eqref{eq.ksc_1} can be regraded as a variant of dictionary learning \cite{aharon2006k} and sparse coding problem \cite{mairal2009online}.
Once $\bm{C}$ is obtained from \eqref{eq.ksc_1}, the clusters can be identified as
\begin{equation}\label{eq.cluster_assign_0}
\bm{x}_i\in c_j,\quad  \textup{if}\ \bm{C}_{:i}^{(j)}\neq \bm{0},\ i\in[n],
\end{equation}
where $c_j$ corresponds to $\mathcal{S}_j$, $j\in [k]$. The bases of $\mathcal{S}_1,\dots,\mathcal{S}_k$ can be computed by applying singular value decomposition to the dictionaries $\bm{D}^{(1)}, \ldots,\bm{D}^{(k)}$. We call \eqref{eq.ksc_1} k-Factorization Subspace Clustering (k-FSC). The general idea of k-FSC is shown in Figure \ref{fig_kFSC}.

\begin{figure}[h!]
\centering
\includegraphics[width=8.5cm]{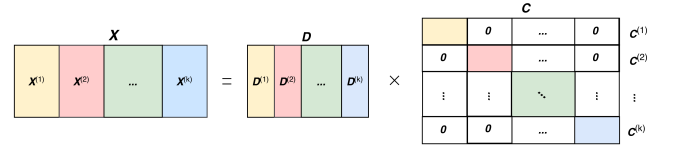}
\caption{Model structure of k-FSC (we let $\bm{X}=[\bm{X}^{(1)},\ldots,\bm{X}^{(k)}]$ for better visualization, though in practice $\bm{X}$ has been permuted before clustering and $\bm{C}$ is not  block-diagonal; but there always exists a permutation making $\bm{C}$ block-diagonal).}\label{fig_kFSC}
\end{figure}

In \eqref{eq.ksc_1}, the constraints on $\bm{C}$ make the problem difficult to solve. Particularly, in $\mathbb{S}_C$,  the first constraint on $\bm{C}$ may never be ensured because we usually don't know the number of data points in each clusters. However, later, we will show that, without the first constraint on $\bm{C}$, we can still cluster the data correctly. To tackle the second constraint on $\bm{C}$, we propose and expect to minimize the number of nonzero groups in each column of $\bm{C}$ to 1 via solving
\begin{equation}\label{eq.ksc_1.5}
\begin{aligned}
\mathop{\textup{minimize}}_{\bm{D}\in\mathbb{S}_D,\bm{C}}&\ \ \sum_{i=1}^n\Vert[\bm{C}_{:i}^{(1)},\ldots, \bm{C}_{:i}^{(k)}]\Vert_{2,0}=\sum_{j=1}^k\Vert\bm{C}^{(j)}\Vert_{2,0},\\
\textup{subject to}& \ \ \bm{D}\bm{C}=\bm{X},
\end{aligned}
\end{equation}
where $\Vert \cdot\Vert_{2,0}$ denotes the number of nonzero columns of matrix. The following proposition (we defer all proof of this paper to the appendices) verifies the effectiveness of  \eqref{eq.ksc_1.5}.
\begin{proposition}\label{proposition_1}
Suppose $\bm{X}$ is given by Assumption \ref{asump_X}, where $d_1=\cdots d_k=d$ and $\min_{j\in[d],i\in[n]}\vert z_{ji}\vert>0$. In \eqref{eq.ksc_1.5}, let $\bm{D}^{(j)}\in\mathbb{R}^{m\times \hat{d}}$, $\forall j\in[k]$, and let $\bm{C}^\ast$ be an optimal solution.\\
(a)  If the subspaces are independent, i.e. $\textup{dim}(\mathcal{S}_1\cup\mathcal{S}_1\cdots\cup\mathcal{S}_k)=\sum_{j=1}^k\textup{dim}(\mathcal{S}_j)=kd$, and $d\leq \hat{d}<2d$, applying \eqref{eq.cluster_assign_0} to $\bm{C}^\ast$ clusters the data correctly.
\\
(b) If the subspaces share $\bar{d}$ bases ($\bar{d}<d$), i.e. $\textup{dim}(\mathcal{S}_1\cup\mathcal{S}_1\cdots\cup\mathcal{S}_k)=\sum_{j=1}^k(\textup{dim}(\mathcal{S}_j)-\bar{d})+\bar{d}=kd-(k-1)\bar{d}$, and $d\leq \hat{d}<2d-\bar{d}$, applying \eqref{eq.cluster_assign_0} to $\bm{C}^\ast$ clusters the data correctly.
\end{proposition}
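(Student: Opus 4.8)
The plan is to show that the optimal value of \eqref{eq.ksc_1.5} equals $n$, that this value is attained exactly when every data column activates a single group, and that any such activation pattern must coincide with the true subspace membership. I would begin with the lower bound. Since $\min_{j,i}\vert z_{ji}\vert>0$ and $\bm{U}^{(j)}$ has orthonormal columns, each $\bm{x}_i=\bm{U}^{(j)}\bm{z}_i$ is nonzero; hence for any feasible $(\bm{D},\bm{C})$ the column $\bm{C}_{:i}$ cannot vanish, so it has at least one nonzero group. Summing over $i$ gives $\sum_{j=1}^k\Vert\bm{C}^{(j)}\Vert_{2,0}=\sum_{i=1}^n\sum_{j=1}^k\mathbb{1}(\bm{C}^{(j)}_{:i}\neq\bm{0})\geq n$. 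For attainability I would place the orthonormal basis $\bm{U}^{(j)}$ (whose columns already satisfy the norm constraint of $\mathbb{S}_D$) into the first $d$ columns of $\bm{D}^{(j)}$, which is possible because $\hat d\geq d$, and set $\bm{C}^{(j)}_{:i}$ to the coordinates of $\bm{x}_i$ when $\bm{x}_i\in\mathcal{S}_j$ and to $\bm{0}$ otherwise. This is feasible with objective exactly $n$, so the optimum is $n$ and, at any optimal $\bm{C}^\ast$, every column has exactly one nonzero group; thus \eqref{eq.cluster_assign_0} returns a well-defined partition $G_1,\dots,G_k$ of $[n]$.

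The core of the argument is to show this partition respects the subspaces. Writing $V_{j'}:=\textup{range}(\bm{D}^{(j')})$, every point indexed by $G_{j'}$ lies in $V_{j'}$ and $\dim V_{j'}\leq\hat d$. Suppose toward a contradiction that some $G_{j'}$ contains points from two distinct subspaces $\mathcal{S}_a$ and $\mathcal{S}_b$. Using $\min_{j,i}\vert z_{ji}\vert>0$ together with the generic position of the samples, I would argue that the points of each subspace captured by $G_{j'}$ span that entire subspace, so $V_{j'}\supseteq\mathcal{S}_a+\mathcal{S}_b$. Independence then forces $\dim(\mathcal{S}_a+\mathcal{S}_b)=2d$, whereas $\dim V_{j'}\leq\hat d<2d$, a contradiction. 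Hence each group draws from a single subspace. Since one empty group would leave $k$ subspaces to be hosted by fewer than $k$ groups, a pigeonhole argument shows every group is nonempty and the map from groups to subspaces is a bijection, so \eqref{eq.cluster_assign_0} clusters correctly. For part (b) the same template applies once independence is replaced by the shared-basis count: two subspaces sharing $\bar d$ directions satisfy $\dim(\mathcal{S}_a+\mathcal{S}_b)=2d-\bar d$, and the hypothesis $\hat d<2d-\bar d$ reproduces the dimensional contradiction.

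I expect the main obstacle to be the spanning step inside the contradiction: justifying that if $G_{j'}$ meets $\mathcal{S}_a$ at all, then it captures enough points of $\mathcal{S}_a$ to span it. A crude dimension count is insufficient, since a group with $\hat d>d$ could in principle absorb a sliver of foreign directions of dimension up to $\hat d-d$. It is precisely the full-support condition $\min_{j,i}\vert z_{ji}\vert>0$, together with the assumption that each subspace is sampled in general position so that its points span it, that must be invoked to rule out an optimal solution that ``peels off'' a low-dimensional subset of one subspace into another group. Verifying this carefully is the delicate part of the proof; once it is secured, the dimensional contradiction and the pigeonhole bijection close both cases.
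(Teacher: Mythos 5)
Your architecture is the same as the paper's: show the optimum of \eqref{eq.ksc_1.5} equals $n$ (every column of $\bm{C}$ needs at least one active group, and embedding $\bm{U}^{(j)}$ into $\bm{D}^{(j)}$ attains $n$), then derive a contradiction from a group hosting two subspaces via a dimension count, and finish with pigeonhole. Your write-up of the first stage is in fact more careful than the paper's. The genuine gap is the step you yourself flag and leave open: proving that if a group contains any point of $\mathcal{S}_a$, then the column space of that group's dictionary block must contain all of $\mathcal{S}_a$. The paper does not prove this either; its proof simply asserts that, because the subspaces are independent and $\min_{j,i}\vert z_{ji}\vert>0$, having $\bm{x}_1\in\mathcal{S}_j$ and $\bm{x}_2\in\mathcal{S}_l$ both represented by a single block $\bm{D}^{(p)}$ forces the column space of $\bm{D}^{(p)}$ to contain both $\bm{U}^{(j)}$ and $\bm{U}^{(l)}$, whence $\hat{d}\geq 2d$. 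That assertion is exactly your missing spanning step, stated without justification.

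Moreover, your suspicion that this step cannot be extracted from the stated hypotheses is well founded. Full support does not help: a vector with all nonzero coordinates still spans only a line. Concretely, with $k=2$, $d=2$, $\hat{d}=3$, and independent $\mathcal{S}_1,\mathcal{S}_2$, let $\bm{D}^{(1)}$ have column space $\mathcal{S}_1+\textup{span}(\bm{x})$ for a single point $\bm{x}\in\mathcal{S}_2$, and let $\bm{D}^{(2)}$ have column space $\mathcal{S}_2$. Then every data point is reproduced with exactly one active group, the objective attains its minimum $n$, yet $\bm{x}$ is assigned to the wrong cluster; so the ``peeling'' scenario you describe actually occurs at optimality. (Even $\hat{d}=d$ is not safe under full support alone: if the points of each subspace all lie on a common full-support line, one block can host both lines and the other block stays empty.) Closing the argument therefore requires hypotheses beyond those stated---for instance that the samples of each subspace falling in any single group span that subspace---which is precisely the ``general position'' assumption you invoked but which is not part of the proposition. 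In short: your route is the paper's route, your two solid stages match the paper's, and the one step you could not close is the step the paper's own proof leaves unsupported.
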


Nevertheless, it is NP-hard to solve \eqref{eq.ksc_1.5} because of the presence of the $\ell_{2,0}$ norm. We define 
$$\mathbb{S}_{D}^+:=\lbrace \bm{D}: \bm{D}\in\mathbb{S}_{D}; \Vert{\bm{D}^{(l)}}^\dagger\bm{D}^{(j)}\Vert_2<1, \forall\ (l,j)\in[k]\times[k], l\neq j\rbrace,$$
where ${{\bm{D}}^{(l)}}^\dagger$ denotes the Moore–Penrose inverse (detailed in Appendix \ref{app_th1}) of $\hat{\bm{D}}^{(l)}$.
We can solve the following tractable problem 
\begin{equation}\label{eq.ksc_2_0}
\mathop{\textup{minimize}}_{\bm{D}\in\mathbb{S}_D^+,\bm{C}}\ \ \sum_{j=1}^k\Vert\bm{C}^{(j)}\Vert_{2,1},\quad \textup{subject to} \ \ \bm{D}\bm{C}=\bm{X},
\end{equation}
where $\Vert\cdot\Vert_{2,1}$ denotes the $\ell_{2,1}$ norm \cite{ding2006r} of matrix defined by $
\Vert\bm{Y}\Vert_{2,1}=\sum_j\Vert \bm{Y}_{:j}\Vert$.
It is a convex relaxation of the $\ell_{2,0}$ norm and has been used in many problems such as feature selection \cite{nie2010efficient} and matrix recovery \cite{fan2019factor}. 
We have the following theoretical guarantee.
\begin{theorem}\label{the_d}
Suppose $\bm{X}$ is given by Assumption \ref{asump_X}, where $d_1=\cdots d_k=d$ and $\min_{j\in[d],i\in[n]}\vert z_{ji}\vert>0$. In \eqref{eq.ksc_2_0}, let $\bm{D}^{(j)}\in\mathbb{R}^{m\times \hat{d}}$, $\forall j\in[k]$ and $d\leq\hat{d}<\infty$. Let $\bm{C}^\ast$ be an optimal solution. Then applying \eqref{eq.cluster_assign_0} to $\bm{C}^\ast$ clusters the data correctly.
\end{theorem}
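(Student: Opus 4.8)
The plan is to exploit the fact that the objective in \eqref{eq.ksc_2_0} separates across the columns of $\bm{C}$: for a fixed dictionary $\bm{D}$, the constraint $\bm{D}\bm{C}=\bm{X}$ and the cost $\sum_{j}\Vert\bm{C}^{(j)}\Vert_{2,1}=\sum_{i}\sum_{j}\Vert\bm{C}^{(j)}_{:i}\Vert$ decouple into $n$ independent block-sparse recovery problems, one per data point $\bm{x}_i$. Correct clustering via \eqref{eq.cluster_assign_0} then amounts to showing that, at any optimal $(\bm{D}^\ast,\bm{C}^\ast)$, every column $\bm{C}^\ast_{:i}$ has exactly one nonzero block, that this block is the same for all points of a given subspace, and that distinct subspaces map to distinct blocks. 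I would organize the argument around two structural lemmas about $\mathbb{S}_D^+$ together with a global optimality argument that pins down $\bm{D}^\ast$.

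First I would record an \emph{independence lemma}: if $\bm{D}\in\mathbb{S}_D^+$ then $\textup{range}(\bm{D}^{(a)})\cap\textup{range}(\bm{D}^{(b)})=\lbrace\bm{0}\rbrace$ for all $a\neq b$. Indeed, a common nonzero vector $\bm{v}$ gives minimum-norm coefficients $\bm{p}={\bm{D}^{(a)}}^\dagger\bm{v}$ and $\bm{q}={\bm{D}^{(b)}}^\dagger\bm{v}$ with $\bm{D}^{(a)}\bm{p}=\bm{D}^{(b)}\bm{q}=\bm{v}$; then ${\bm{D}^{(a)}}^\dagger\bm{D}^{(b)}\bm{q}=\bm{p}$ forces $\Vert\bm{p}\Vert<\Vert\bm{q}\Vert$, and symmetrically $\Vert\bm{q}\Vert<\Vert\bm{p}\Vert$, a contradiction. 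Hence the block ranges have pairwise trivial intersection, so no nonzero vector lies in two block ranges. Second, the \emph{single-block recovery lemma}: if $\bm{D}\in\mathbb{S}_D^+$ and $\bm{x}\in\textup{range}(\bm{D}^{(j)})$, then the per-column minimizer is supported on block $j$ alone. For any feasible $\bm{c}$ with $\bm{D}\bm{c}=\bm{x}$, applying ${\bm{D}^{(j)}}^\dagger$ and using that ${\bm{D}^{(j)}}^\dagger\bm{D}^{(j)}$ is an orthogonal projection gives
\begin{equation*}
\Vert{\bm{D}^{(j)}}^\dagger\bm{x}\Vert\leq\Vert\bm{c}^{(j)}\Vert+\sum_{l\neq j}\Vert{\bm{D}^{(j)}}^\dagger\bm{D}^{(l)}\Vert_2\,\Vert\bm{c}^{(l)}\Vert<\sum_{l}\Vert\bm{c}^{(l)}\Vert
\end{equation*}
whenever some off-block $\bm{c}^{(l)}\neq\bm{0}$, because each $\Vert{\bm{D}^{(j)}}^\dagger\bm{D}^{(l)}\Vert_2<1$. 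Since the candidate with $\bm{c}^{(j)}={\bm{D}^{(j)}}^\dagger\bm{x}$ and $\bm{c}^{(l)}=\bm{0}$ for $l\neq j$ is feasible (using $\bm{x}\in\textup{range}(\bm{D}^{(j)})$), it strictly beats any mixed representation, so the minimizer concentrates in block $j$.

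Combining the two lemmas, correct clustering reduces to the global claim that the optimal $\bm{D}^\ast$ devotes exactly one block to each subspace: there is a bijection $\pi$ with $\mathcal{S}_j\subseteq\textup{range}(\bm{D}^{\ast(\pi(j))})$ and no block range containing two subspaces. I would prove this by a cost comparison against the canonical aligned dictionary, each block an orthonormal basis of one $\mathcal{S}_j$ padded to $\hat{d}$ columns, showing that any $\bm{D}^\ast$ that either \emph{merges} two subspaces into one block or \emph{fragments} a subspace across blocks is strictly more expensive: a merged block serves the points of two subspaces with only $\hat{d}$ columns while wasting another block, whereas a fragmented subspace forces mixed and hence, by the single-block lemma, strictly costlier per-column representations. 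Here the hypothesis $\min_{j,i}\vert z_{ji}\vert>0$ guarantees that the data genuinely fill each $\mathcal{S}_j$, so the block serving $\mathcal{S}_j$ must span all of $\mathcal{S}_j$ and no cheaper degenerate representation exists, while $d\leq\hat{d}$ ensures the aligned dictionary is feasible.

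This last step is where I expect the real difficulty to lie: the minimum-norm representation cost of a point depends in a delicate way on how many dictionary columns lie in its direction, so a fatter or better-placed block can lower the cost, and one must quantify this dependence carefully enough to prove that spreading columns over a merged $2d$-dimensional range, or splitting a subspace, is provably suboptimal. The two lemmas above are the routine engine that, once $\bm{D}^\ast$ is known to be block-aligned, converts this structure into the exact single-block support pattern required by \eqref{eq.cluster_assign_0}; isolating the global optimality argument that forces the alignment is the main obstacle.
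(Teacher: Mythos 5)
Your two preliminary lemmas are correct, and they are essentially the paper's Lemma \ref{lem_opt1sparse}: the paper proves per-column single-block support by exactly your inequality chain, applying ${\bm{D}^{(l)}}^\dagger$ to a mixed representation and invoking $\Vert{\bm{D}^{(l)}}^\dagger\bm{D}^{(j)}\Vert_2<1$ from the definition of $\mathbb{S}_D^+$ (your treatment of ${\bm{D}^{(j)}}^\dagger\bm{D}^{(j)}$ as an orthogonal projection is in fact slightly more careful than the paper's appeal to ${\bm{D}^{(l)}}^\dagger\bm{D}^{(l)}=\bm{I}$). Your independence lemma is a clean repackaging of the same mechanism and is consistent with what the paper uses implicitly.

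The problem is the step you defer: showing that a globally optimal $\bm{D}^\ast$ devotes each block to exactly one subspace. You announce the needed claim, sketch a cost comparison against a canonical aligned dictionary, and then explicitly concede that carrying it out is "the main obstacle." That concession is where your proposal stops being a proof: your two lemmas describe the coefficient structure for a dictionary whose blocks already contain the data, but say nothing about why optimality of \eqref{eq.ksc_2_0} forces that alignment. The paper closes exactly this gap with Lemma \ref{lem_aaa}, and its argument is far more elementary than the delicate quantitative comparison you anticipate — it is a local strict-improvement perturbation of the dictionary itself. If a block $\bm{D}^{(l)}$ used by points of $\mathcal{S}_\ell$ has columns outside $\textup{span}(\bm{U}^{(\ell)})$, replace every such column by a copy of an in-span column $\bm{\delta}$ and split the coefficient $\alpha_{d_u}$ carried by $\bm{\delta}$ equally over the $\hat{d}-d_u+1$ copies; spreading a scalar over $r$ duplicate columns costs $\vert\alpha_{d_u}\vert/\sqrt{r}$ instead of $\vert\alpha_{d_u}\vert$, so the objective strictly decreases and the original $\bm{D}$ cannot be optimal. (Duplication preserves the column-norm constraint in $\mathbb{S}_D$, and $d\leq\hat{d}$ keeps the aligned block feasible — this is also precisely where the theorem tolerates arbitrarily large $\hat{d}$, in contrast to Proposition \ref{proposition_1}.) With every block's columns confined to a single subspace's span, the paper then combines your single-block lemma with the hypothesis $\min_{j\in[d],i\in[n]}\vert z_{ji}\vert>0$ — a point with all-nonzero coordinates cannot be reconstructed from a block spanning a different subspace or an intersection of spans — to conclude that each point is represented by, and only by, the block aligned with its own subspace, which is what \eqref{eq.cluster_assign_0} needs. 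Without this dictionary-perturbation argument, or some substitute for it, your proposal does not establish the theorem.
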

%The following theorem shows that replacing $\Vert\cdot\Vert_{2,0}$ with  $\Vert\cdot\Vert_{2,1}$ reduces the conditions required in correct clustering.
%
%\begin{theorem}\label{the_guarantee1}
%Suppose $\bm{X}$ is given by Assumption \ref{asump_X}, where $d_1=\cdots d_k=d$ and $\min_{j\in[d],i\in[n]}\vert z_{ji}\vert>0$. Let $\lbrace \bm{D}^\ast, \bm{C}^\ast \rbrace$ be the optimal solution of \eqref{eq.ksc_2_0}, in which $\hat{d}\geq d$. Then applying (3) to $\bm{C}^\ast$  identifies the clusters of $\bm{X}$ correctly.
%\end{theorem}
Since real data are often noisy, we relax \eqref{eq.ksc_2_0} to
\begin{equation}\label{eq.ksc_2}
\begin{aligned}
\mathop{\textup{minimize}}_{\bm{D}\in\mathbb{S}_D,\bm{C}}&\ \ \dfrac{1}{2}\Vert \bm{X}-\bm{D}\bm{C}\Vert_F^2+\lambda\sum_{j=1}^k\Vert\bm{C}^{(j)}\Vert_{2,1},
\end{aligned}
\end{equation}
where $\lambda$ is a hyper-parameter to be determined in advance. Note that here we used $\mathbb{S}_D$ instead of $\mathbb{S}_D^+$ because the former is easier to optimize\footnote{As $\Vert{\bm{D}^{(l)}}^\dagger\bm{D}^{(j)}\Vert_2\leq\tfrac{\Vert{\bm{D}^{(l)}}^\top\bm{D}^{(j)}\Vert_2}{\sigma_{+_{\text{min}}}^{2}(\bm{D}^{(l)})}\leq\tfrac{\Vert{\bm{D}^{(l)}}^\top\bm{D}^{(j)}\Vert_F}{\sigma_{+_{\text{min}}}^{2}(\bm{D}^{(l)})}$, one may maximize the smallest nonzero singular value of $\bm{D}^{(l)}$ and minimize $\Vert{\bm{D}^{(l)}}^\top\bm{D}^{(j)}\Vert_2$ or $\Vert{\bm{D}^{(l)}}^\top\bm{D}^{(j)}\Vert_F$.}. In experiments, we observed that when using $\mathbb{S}_D$,  the constraint with $\mathbb{S}_D^+$ is often fulfilled implicitly provided that the angles between pair-wise subspaces are not too small.
The following theorem is able to provide a rule of thumb to set $\lambda$.
\begin{theorem}\label{the_lambda}
Suppose $\lbrace \bm{C}_\ast,\bm{D}_\ast\rbrace$ is a solution of \eqref{eq.ksc_2}.
For $i\in[n]$, let $\pi_{i1}=\mathop{\text{argmax}}_{1\leq j\leq k}\Vert {{\bm{D}}_\ast^{(j)}}^\top\bm{x}_i\Vert$ 
and 
$\pi_{i2}=\mathop{\text{argmax}}_{1\leq j\neq\pi_{i1}\leq k}\Vert {\bm{D}_\ast^{(j)}}^\top\bm{x}_i\Vert.$ If $\max_{1\leq i\leq n}\Vert {\bm{D}_\ast^{(\pi_{i2})}}^\top\bm{x}_i\Vert<\lambda\leq\min_{1\leq i\leq n}\Vert {\bm{D}_\ast^{(\pi_{i1})}}^\top\bm{x}_i\Vert$, then
$$\sum_{j=1}^k\mathbb{1}({\bm{C}_\ast}_{:i}^{(j)}\neq \bm{0})=1, \forall\ i\in [n].$$
\end{theorem}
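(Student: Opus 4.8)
The plan is to exploit the fact that once the dictionary is frozen at $\bm{D}_\ast$, the objective in \eqref{eq.ksc_2} splits across the columns of $\bm{C}$, so that the $i$-th column ${\bm{C}_\ast}_{:i}$ is a minimizer of the convex group-lasso problem
\begin{equation*}
f_i(\bm{c}) = \tfrac{1}{2}\Vert \bm{x}_i - \textstyle\sum_{j=1}^k \bm{D}_\ast^{(j)}\bm{c}^{(j)}\Vert^2 + \lambda\sum_{j=1}^k\Vert \bm{c}^{(j)}\Vert
\end{equation*}
in the $k$ blocks $\bm{c}^{(1)},\dots,\bm{c}^{(k)}$. The target $\sum_{j=1}^k\mathbb{1}({\bm{C}_\ast}_{:i}^{(j)}\neq\bm{0})=1$ is exactly the claim that this per-column problem has one active block, so for every $i$ it suffices to prove two things: at least one block is nonzero, and at most one block is nonzero.

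First I would write the subgradient (KKT) conditions for $f_i$. With residual $\bm{r}_i = \bm{x}_i - \bm{D}_\ast {\bm{C}_\ast}_{:i}$ and the subdifferential $\partial\Vert\bm{c}^{(j)}\Vert=\{\bm{c}^{(j)}/\Vert\bm{c}^{(j)}\Vert\}$ when $\bm{c}^{(j)}\neq\bm{0}$ and $\{\bm{g}:\Vert\bm{g}\Vert\le 1\}$ when $\bm{c}^{(j)}=\bm{0}$, optimality forces, for each block $j$,
\begin{equation*}
\Vert {\bm{D}_\ast^{(j)}}^\top\bm{r}_i\Vert = \lambda\ \text{if } {\bm{C}_\ast}_{:i}^{(j)}\neq\bm{0}, \qquad \Vert {\bm{D}_\ast^{(j)}}^\top\bm{r}_i\Vert \le \lambda\ \text{if } {\bm{C}_\ast}_{:i}^{(j)}=\bm{0}.
\end{equation*}
I will confront these against the hypothesis, which says exactly that, among the $k$ data-side correlations $\Vert {\bm{D}_\ast^{(j)}}^\top\bm{x}_i\Vert$, only the largest (index $\pi_{i1}$) is $\ge\lambda$ while every $j\neq\pi_{i1}$ obeys $\Vert {\bm{D}_\ast^{(j)}}^\top\bm{x}_i\Vert\le\Vert {\bm{D}_\ast^{(\pi_{i2})}}^\top\bm{x}_i\Vert<\lambda$.

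For the lower bound I argue by contradiction: if ${\bm{C}_\ast}_{:i}=\bm{0}$ then $\bm{r}_i=\bm{x}_i$, and the zero-block condition would require $\Vert {\bm{D}_\ast^{(\pi_{i1})}}^\top\bm{x}_i\Vert\le\lambda$, contradicting $\Vert {\bm{D}_\ast^{(\pi_{i1})}}^\top\bm{x}_i\Vert\ge\lambda$ outside the non-generic boundary $\Vert {\bm{D}_\ast^{(\pi_{i1})}}^\top\bm{x}_i\Vert=\lambda$ (dismissible as a rule-of-thumb bound). For the upper bound I show every block $j\neq\pi_{i1}$ is inactive by pushing the residual condition back onto the data: write ${\bm{D}_\ast^{(j)}}^\top\bm{x}_i = {\bm{D}_\ast^{(j)}}^\top\bm{r}_i + {\bm{D}_\ast^{(j)}}^\top\bm{D}_\ast {\bm{C}_\ast}_{:i}$ and project the equation for a (hypothetically) active block $j$ onto $\bm{u}_j={\bm{C}_\ast}_{:i}^{(j)}/\Vert {\bm{C}_\ast}_{:i}^{(j)}\Vert$; the residual part contributes exactly $\lambda$ and the self term ${\bm{D}_\ast^{(j)}}^\top\bm{D}_\ast^{(j)}{\bm{C}_\ast}_{:i}^{(j)}$ contributes $\Vert {\bm{C}_\ast}_{:i}^{(j)}\Vert\,\Vert\bm{D}_\ast^{(j)}\bm{u}_j\Vert^2\ge0$, so that an active $j$ would force $\Vert {\bm{D}_\ast^{(j)}}^\top\bm{x}_i\Vert\ge\lambda$ up to the contribution of the other blocks, contradicting $\Vert {\bm{D}_\ast^{(j)}}^\top\bm{x}_i\Vert<\lambda$.

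The hard part, and the only delicate point, is the cross-block interference term $\langle \sum_{l\neq j}\bm{D}_\ast^{(l)}{\bm{C}_\ast}_{:i}^{(l)},\ \bm{D}_\ast^{(j)}\bm{u}_j\rangle$ appearing in that projection: without extra structure it has no definite sign and can inflate $\Vert {\bm{D}_\ast^{(j)}}^\top\bm{x}_i\Vert$, so the clean implication ``active block $\Rightarrow$ data-correlation $\ge\lambda$'' is not free. I expect to control it via the near-orthogonality of the sub-dictionaries encoded by $\mathbb{S}_D^+$ (equivalently the text's assumption that pairwise subspace angles are not too small), which bounds $\Vert {\bm{D}_\ast^{(l)}}^\top\bm{D}_\ast^{(j)}\Vert_2$ and makes the interference negligible against the spectral gap built into the hypothesis on $\lambda$. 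An equivalent and cleaner packaging is a verification argument: build the candidate $\hat{\bm{c}}$ supported only on block $\pi_{i1}$ by solving the single-block subproblem, then check that it satisfies the full KKT system for $f_i$ — the active condition holds by construction, while the inactive conditions $\Vert {\bm{D}_\ast^{(j)}}^\top(\bm{x}_i-\bm{D}_\ast^{(\pi_{i1})}\hat{\bm{c}}^{(\pi_{i1})})\Vert\le\lambda$ for $j\neq\pi_{i1}$ reduce to the same coherence bound; convexity of $f_i$ then certifies optimality and, with uniqueness of the fit $\bm{D}_\ast\bm{c}$, pins the active set to $\{\pi_{i1}\}$.
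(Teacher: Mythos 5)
Your approach is the same as the paper's: freeze $\bm{D}_\ast$, decompose the objective over the columns of $\bm{C}$, and read off the block-wise subgradient (KKT) conditions of the per-column group-lasso problem. Your ``at least one active block'' step is exactly what the paper actually proves: the paper derives that an all-zero column forces $\Vert{\bm{D}_\ast^{(j)}}^\top\bm{x}_i\Vert<\lambda$ for all $j$, and the contrapositive together with $\lambda\leq\Vert{\bm{D}_\ast^{(\pi_{i1})}}^\top\bm{x}_i\Vert$ rules out ${\bm{C}_\ast}_{:i}=\bm{0}$ (both you and the paper wave at the boundary case $\lambda=\Vert{\bm{D}_\ast^{(\pi_{i1})}}^\top\bm{x}_i\Vert$ by using a strict-inequality form of the subdifferential at zero).

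The delicate point you flag --- that the inactive-block test involves the residual $\bm{r}_i=\bm{x}_i-\bm{D}_\ast{\bm{C}_\ast}_{:i}$ rather than $\bm{x}_i$, so cross-block interference has no free sign control --- is a genuine gap, and you should know that the paper's own proof does not close it either: after the all-zero analysis it simply states ``expanding the result for all columns \ldots we finish the proof,'' never addressing the at-most-one direction. The gap is not a technicality. Take $m=2$, two one-dimensional blocks $\bm{d}_1=(1,0)^\top$, $\bm{d}_2=(0.5,\sqrt{0.75})^\top$, a single point $\bm{x}=(10,-4.73)^\top$, and $\lambda=1$. Then $\vert\bm{d}_1^\top\bm{x}\vert=10\geq\lambda$ while $\vert\bm{d}_2^\top\bm{x}\vert\approx 0.90<\lambda$, so the theorem's hypothesis holds, and even $\vert\bm{d}_1^\top\bm{d}_2\vert=0.5<1$, i.e.\ $\bm{D}\in\mathbb{S}_D^+$; yet the one-active-block candidate $c_1=9$, $c_2=0$ violates the inactive condition since $\vert\bm{d}_2^\top(\bm{x}-9\bm{d}_1)\vert=3.6>\lambda$, and the block-2-only and all-zero patterns also fail KKT, so the unique minimizer in $\bm{C}$ has \emph{both} blocks active. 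Hence, using only optimality of $\bm{C}_\ast$ given $\bm{D}_\ast$ --- which is all that both you and the paper invoke --- the claimed conclusion is false; a complete proof must either exploit joint optimality of $(\bm{C}_\ast,\bm{D}_\ast)$ or impose a quantitative incoherence condition on the order of $\Vert{\bm{D}_\ast^{(l)}}^\top\bm{D}_\ast^{(j)}\Vert_2\bigl(\Vert{\bm{D}_\ast^{(\pi_{i1})}}^\top\bm{x}_i\Vert-\lambda\bigr)<\lambda-\Vert{\bm{D}_\ast^{(\pi_{i2})}}^\top\bm{x}_i\Vert$, which is strictly stronger than membership in $\mathbb{S}_D^+$ (my example has coherence $0.5$). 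Your instinct that the cross term requires extra structure is therefore correct, and on this point your write-up is more careful than the paper's own proof.
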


According to Theorem \ref{the_lambda}, if we have a good initialization of $\bm{D}$, denoted by $\bm{D}_0$ (detailed in Section \ref{sec_initialization}), we can determine $\lambda$ as
\begin{equation}\label{eq.determine_lambda}
\lambda=\left(\max_{1\leq i\leq n}\Vert {\bm{D}_0^{(\pi_{i2})}}^\top\bm{x}_i\Vert+\min_{1\leq i\leq n}\Vert {\bm{D}_0^{(\pi_{i1})}}^\top\bm{x}_i\Vert\right)/2.
\end{equation}

Owning to noise and local minima, it is possible that $\sum_{j=1}^k\mathbb{1}(\bm{C}_{:i}^{(j)}\neq \bm{0})> 1$ for some $i\in[n]$ when we estimate $\bm{D}$ and $\bm{C}$ by \eqref{eq.ksc_2}. Thus, we cannot use \eqref{eq.cluster_assign_0} to assign the data into clusters. We propose to assign $\bm{x}_i$ to cluster $j$ if the reconstruction error given by $\bm{D}^{(j)}$ is the least:
\begin{equation}\label{eq.cluster_assign2}
\bm{x}_i\in c_j,\quad  j=\textup{argmin}_j\ \Vert \bm{x}_i-\bm{D}^{(j)}\hat{\bm{C}}_{:i}^{(j)}\Vert^2,\ i\in[n],
\end{equation}
where $\hat{\bm{C}}^{(j)}=({\bm{D}^{(j)}}^\top\bm{D}^{(j)}+\lambda'\bm{I})^{-1}{\bm{D}^{(j)}}^\top\bm{X}$ and $\lambda'$ is a small constant e.g. $10^{-5}$.

%Therefore, the clusters can be identified by
%\begin{equation}\label{eq.cluster_assign}
%\bm{x}_i\in c_j,\quad  j=\textup{argmax}_j\ \Vert\bm{C}_{:i}^{(j)}\Vert,\ i\in[n].
%\end{equation}
%or 
%\begin{equation}\label{eq.cluster_assign2}
%\bm{x}_i\in c_j,\quad  j=\textup{argmin}_j\ \Vert\bm{x}_i-\bm{D}^{(j)}\bm{C}_{:i}^{(j)}\Vert,\ i\in[n].
%\end{equation}
In practice, it is difficult to know $d_1,\ldots,d_k$ beforehand. We set $d_1=\ldots=d_k=d$, where $d$ is a relatively large number, though it can be arbitrarily large according to Theorem \ref{the_d}. Figure \ref{fig_syn_noise_d} in Section \ref{sec_exp_syn} and Figure \ref{Fig_syn_DL} in the Appendix \ref{app_DL} will show that k-FSC is not sensitive to $d$ and  indeed $d$ can be arbitrarily large. Comparing these results with Proposition \ref{proposition_1}, we see that \eqref{eq.ksc_2_0} and \eqref{eq.ksc_2} are much more flexible than \eqref{eq.ksc_1.5}  in terms of determining ${d}$ though they are the relaxed formulations of \eqref{eq.ksc_1.5}.

\section{Optimization for k-FSC}\label{sec_3}

Problem \eqref{eq.ksc_2} is nonconvex and nonsmooth. When $\bm{D}$ (or $\bm{C}$) is fixed, problem \eqref{eq.ksc_2} regarding of $\bm{C}$ (or $\bm{D}$) is convex. Hence we  update $\bm{D}$ and $\bm{C}$ alternately. 

\subsection{Initialization}\label{sec_initialization}
We can initialize $\bm{D}$ randomly, e.g. draw the entries of $\bm{D}$ from $\mathcal{N}(0,1)$. Alternatively,  we initialize $\bm{D}$ by k-means, which may improve the convergence of optimization and clustering accuracy. 
It is worth mentioning that k-means with Euclidean distance measure cannot exploit subspace information and hence does not give us an effective initialization. Instead, we use cosine similarity,  $\cos \theta=\tfrac{\bm{x}^\top\bm{y}}{\Vert\bm{x}\Vert\Vert\bm{y}\Vert}$, as a distance measure in k-means. Two data points lying in a same subspace tend to have larger absolute cosine value than lying in different subspaces. Therefore, k-means with cosine ``distance" measure is able to provide a better initialization for $\bm{D}$ than k-means with Euclidean distance measure. The procedures are: 1) perform k-means with cosine ``distance" measure on $\bm{X}$ (or a subset of $\bm{X}$ when the dataset is too large) to generate cluster centers $\bm{c}_1,\ldots,\bm{c}_k$;
2) for $j\in[k]$, let $\bm{D}_0^{(j)}$ consists of the left singular vectors of a matrix formed by  the $d$ columns of $\bm{X}$ (or the subset) closest to $\bm{c}_j$. Consequently, we initialize $\bm{C}$ by $\bm{C}_0=(\bm{D}_0^\top \bm{D}_0+\hat{\lambda}\bm{I})^{-1}\bm{D}_0^\top \bm{X}$, where $\hat{\lambda}$ is a small constant such as $10^{-5}$.
%With $\bm{D}$ and $\bm{C}$ initialized, we propose to solve \eqref{eq.ksc_2} by block coordinate descent (BCD) with extrapolation \cite{xu2013block} because extrapolation can accelerate the convergence of BCD.

\subsection{Update ${C}$}\label{sec_updateC}

At iteration $t$, we fix $\bm{D}$ and solve
\begin{equation}\label{eq.ksc_C}
\begin{aligned}
\mathop{\textup{minimize}}_{\bm{C}}&\ \ \dfrac{1}{2}\Vert \bm{X}-\bm{D}_{t-1}\bm{C}\Vert_F^2+\lambda\sum_{j=1}^k\Vert\bm{C}^{(j)}\Vert_{2,1}.
\end{aligned}
\end{equation}
Decomposing \eqref{eq.ksc_C}, for $j\in [k]$, we aim to solve
\begin{equation}\label{eq.ksc_CC}
\begin{aligned}
\mathop{\textup{minimize}}_{\bm{C}^{(j)}}&\ \ \dfrac{1}{2}\Vert \bm{X}-\bar{\bm{X}}_j-\bm{D}_{t-1}^{(j)}\bm{C}^{(j)}\Vert_F^2+\lambda\Vert\bm{C}^{(j)}\Vert_{2,1},
\end{aligned}
\end{equation}
where $\bar{\bm{X}}_j=\sum_{l\neq j}\bm{D}_{t-1}^{(l)}\bm{C}_{t'}^{(l)}$, $t'=t$ if $l<j$, and $t'=t-1$ if $l>j$.  Problem \eqref{eq.ksc_CC} has no closed-form solution. Denote 
$$\mathcal{L}(\bm{C}^{(j)}):=\dfrac{1}{2}\Vert \bm{X}-\bar{\bm{X}}_j-\bm{D}_{t-1}^{(j)}\bm{C}^{(j)}\Vert_F^2.$$
The first order approximation of $\mathcal{L}(\bm{C}^{(j)})$ at $\bm{C}_{t-1}^{(j)}$ is
\begin{align*}
\hat{\mathcal{L}}(\bm{C}^{(j)}):=&\dfrac{1}{2}\Vert \bm{X}-\bar{\bm{X}}_j-\bm{D}_{t-1}^{(j)}\bm{C}_{t-1}^{(j)}\Vert_F^2\\
&+\left\langle \bm{C}^{(j)}-\bm{C}_{t-1}^{(j)},\bm{G}^{(j)}_{t-1}\right\rangle+\dfrac{\tau}{2}\Vert \bm{C}^{(j)}-\bm{C}_{t-1}^{(j)}\Vert_F^2,
\end{align*}
where $\bm{G}^{(j)}_{t-1}=\nabla_{\bm{C}^{(j)}}\mathcal{L}(\bm{C}^{(j)})=-{\bm{D}_{t-1}^{(j)}}^\top(\bm{X}-\bar{\bm{X}}_j-\bm{D}_{t-1}^{(j)}\bm{C}_{t-1}^{(j)})$ and $\tau\geq L_{j,t}:=\Vert \bm{D}_{t-1}^{(j)} \Vert_2^2$. As $\hat{\mathcal{L}}(\bm{C}^{(j)})\geq\mathcal{L}(\bm{C}^{(j)})$, we now minimize $\hat{\mathcal{L}}(\bm{C}^{(j)})+\lambda\Vert\bm{C}^{(j)}\Vert_{2,1}$, which is equivalent to
\begin{equation}\label{eq.ksc_CCC}
\begin{aligned}
\mathop{\textup{minimize}}_{\bm{C}^{(j)}}&\ \ \dfrac{\tau}{2}\Vert \bm{C}^{(j)}-\bm{C}_{t-1}^{(j)}+\tau^{-1}\bm{G}^{(j)}_{t-1}\Vert_F^2+\lambda\Vert\bm{C}^{(j)}\Vert_{2,1}.
\end{aligned}
\end{equation}

The closed-form solution of \eqref{eq.ksc_CCC} is
$$ \bm{C}_{t}^{(j)}=\Theta_{\lambda/\tau}(\bm{C}_{t-1}^{(j)}-\tau^{-1}\bm{G}^{(j)}_{t-1}),$$
where $\Theta_u(\cdot)$ is the column-wise soft-thresholding operator \cite{LRR_PAMI_2013}
\begin{equation}
\Theta_u(\bm{v})=\left\{
\begin{array}{ll}
\tfrac{(\Vert\bm{v}\Vert-u)\bm{v}}{\Vert\bm{v}\Vert}, &\textup{if}\ \Vert\bm{v}\Vert>u;\\
\bm{0}, &\textup{otherwise.}
\end{array}
\right.
\end{equation}

The update strategy for $\bm{C}$ is actually block coordinate descent in the manner of Gauss-Seidel. We further use extrapolation to accelerate the optimization \cite{xu2013block}. The procedures are summarized into Algorithm \ref{alg.GS}, in which fixing $\eta=0$ will remove the extrapolation. 

\renewcommand{\algorithmicrequire}{\textbf{Input:}}
\renewcommand{\algorithmicensure}{\textbf{Output:}}
\begin{algorithm}[h]
\caption{Update $\bm{C}$ by Gauss-Seidel method}
\label{alg.GS}
\begin{algorithmic}[1]
\REQUIRE
$\bm{X}$, $\bm{D}_{t-1}$, $\bm{C}_{t-1}$, $\bm{\Delta}$,  $\delta<1$, $\gamma\geq1$.
\FOR{$j=1,2,\ldots,k$}
\IF{$t\leq 2$}
\STATE $\eta_{j,t-1}=0$.
\ELSE
\STATE $\eta_{j,t-1}=\delta\sqrt{\tfrac{\tau_{j,t-2}}{\tau_{j,t-1}}}$.
\ENDIF
\STATE $\hat{\bm{C}}_{t-1}^{(j)}=\bm{C}_{t-1}^{(j)}-\eta_{j,t-1}\bm{\Delta}^{(j)}$.
\ENDFOR
\STATE$\hat{\bm{X}}=\bm{D}_{t-1}\hat{\bm{C}}_{t-1}$.
\FOR{$j=1,2,\ldots,k$}
\STATE $\bm{G}^{(j)}_{t-1}=-{\bm{D}^{(j)}_{t-1}}^\top(\bm{X}-\hat{\bm{X}})$.
\STATE $\tau_{j,t-1}=\gamma\Vert \bm{D}^{(j)}_{t-1}\Vert_2^2$.
\STATE $\bm{C}_t^{(j)}=\Theta_{\lambda/\tau_{j,t-1}}(\hat{\bm{C}}_{t-1}^{(j)}-\bm{G}^{(j)}_{t-1}/\tau_{j,t-1})$.
\STATE $\hat{\bm{X}}=\hat{\bm{X}}+\bm{D}^{(j)}_{t-1}(\bm{C}_{t}^{(j)}-\hat{\bm{C}}_{t-1}^{(j)})$.
\ENDFOR
\STATE $\bm{\Delta}=\bm{C}_{t-1}-\bm{C}_{t}$.
\ENSURE$\bm{\Delta}$, $\bm{C}_t$.
\end{algorithmic}
\end{algorithm}

In Algorithm \ref{alg.GS}, we  update $\bm{C}^{(j)}$ sequentially for $j\in [k]$, which is not efficient when $k$ and $n$ are large. To improve the efficiency, we may use Jacobi method to update $\bm{C}$, which is shown in Algorithm \ref{alg.Jacobi} and can be implemented parallelly. %Another important difference between Algorithm \ref{alg.GS} and Algorithm \ref{alg.Jacobi} is that the latter computes $\tau$ on the whole dictionary $\bm{D}$, which is required for convergence.

\renewcommand{\algorithmicrequire}{\textbf{Input:}}
\renewcommand{\algorithmicensure}{\textbf{Output:}}
\begin{algorithm}[h]
\caption{Update $\bm{C}$ by Jacobi method}
\label{alg.Jacobi}
\begin{algorithmic}[1]
\REQUIRE
$\bm{X}$, $\bm{D}_{t-1}$, $\bm{C}_{t-1}$, $\gamma\geq1$.
\STATE $\bm{G}=-\bm{D}_{t-1}^\top(\bm{X}-\bm{D}_{t-1}\bm{C}_{t-1})$.
\STATE $\tau=\gamma\Vert \bm{D}_t \Vert_2^2$.
\FOR{$j=1,2,\ldots,k$}
\STATE$\bm{C}_t^{(j)}=\Theta_{\lambda/\tau}(\bm{C}_{t-1}^{(j)}-\bm{G}^{(j)}/\tau)$.
\ENDFOR
\ENSURE $\bm{C}_t$.
\end{algorithmic}
\end{algorithm}

\subsection{Update ${D}$}
After $\bm{C}^{(1)},\ldots,\bm{C}^{(k)}$ have been updated, we solve
\begin{equation}\label{eq.ksc_D}
\begin{aligned}
\mathop{\textup{minimize}}_{\bm{D}\in\mathbb{S}_D}&\ \ \dfrac{1}{2}\Vert \bm{X}-\bm{D}\bm{C}_t\Vert_F^2
\end{aligned}
\end{equation}
by projected gradient descent \cite{parikh2014proximal}. Specifically, for $u\in[\vartheta]$, 
\begin{equation}
\bm{D}_{t_u}=\mathcal{P}_{\Pi}\big(\bm{D}_{t_{u-1}}-\kappa_t^{-1}(\bm{X}-\bm{D}_{t_{u-1}}\bm{C}_t)(-\bm{C}_t^\top)\big),
\end{equation}
where $\kappa_{t}=\Vert \bm{C}_t\bm{C}_t^\top\Vert_2$ and $\mathcal{P}_{\Pi}$ denotes the column-wise projection onto unit ball defined by
\begin{equation}
\mathcal{P}_{\Pi}(\bm{v})=\left\{
\begin{array}{ll}
\bm{v}, &\textup{if}\ \Vert\bm{v}\Vert\leq 1;\\
\bm{v}/\Vert \bm{v}\Vert, &\textup{otherwise.}
\end{array}
\right.
\end{equation}
Algorithm \ref{alg.updateD} details the implementation. The following theorem provides the convergence rate of Algorithm \ref{alg.updateD}.
\begin{theorem}[Theorem 10.21 in \cite{beck2017first}]
Let $\bm{D}_t^\ast$ be the optimal solution of \eqref{eq.ksc_D}. Denote $\mathcal{L}(\bm{D}_{t_u})=\tfrac{1}{2}\Vert \bm{X}-\bm{D}_{t_u}\bm{C}_t\Vert_F^2$, where $\bm{D}_{t_u}\in\mathbb{S}_D$. Then in Algorithm \ref{alg.updateD},
\begin{equation}
\mathcal{L}(\bm{D}_{t_u})-\mathcal{L}(\bm{D}_t^\ast)\leq \dfrac{\kappa_t}{2u}\Vert \bm{D}_{t_u}-\bm{D}_t^\ast\Vert_F^2.
\end{equation}
\end{theorem}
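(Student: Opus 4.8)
The statement is the standard $O(1/u)$ sublinear convergence guarantee for the projected (equivalently, proximal) gradient method applied to a smooth convex objective over a convex feasible set, so the plan is simply to verify that problem \eqref{eq.ksc_D} together with its update rule meets the hypotheses of the cited Theorem 10.21 of \cite{beck2017first}. Concretely, I would check three things in order: that $\mathcal{L}(\bm{D})=\tfrac12\Vert\bm{X}-\bm{D}\bm{C}_t\Vert_F^2$ is convex with a Lipschitz-continuous gradient; that the Lipschitz constant equals $\kappa_t=\Vert\bm{C}_t\bm{C}_t^\top\Vert_2$; and that $\mathcal{P}_{\Pi}$ is the genuine Euclidean projection onto $\mathbb{S}_D$, so that the iteration is a bona fide projected gradient step with step size $1/\kappa_t$.

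First I would record the gradient. Since $\bm{D}\mapsto\bm{X}-\bm{D}\bm{C}_t$ is affine and the squared Frobenius norm is convex, $\mathcal{L}$ is convex, and $\nabla\mathcal{L}(\bm{D})=(\bm{D}\bm{C}_t-\bm{X})\bm{C}_t^\top=-(\bm{X}-\bm{D}\bm{C}_t)\bm{C}_t^\top$, which matches the gradient used in the iteration. For smoothness, $\nabla\mathcal{L}(\bm{D}_1)-\nabla\mathcal{L}(\bm{D}_2)=(\bm{D}_1-\bm{D}_2)\bm{C}_t\bm{C}_t^\top$, whence $\Vert\nabla\mathcal{L}(\bm{D}_1)-\nabla\mathcal{L}(\bm{D}_2)\Vert_F\leq\Vert\bm{C}_t\bm{C}_t^\top\Vert_2\Vert\bm{D}_1-\bm{D}_2\Vert_F=\kappa_t\Vert\bm{D}_1-\bm{D}_2\Vert_F$, so $\nabla\mathcal{L}$ is $\kappa_t$-Lipschitz. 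This is exactly what licenses the sufficient-decrease inequality $\mathcal{L}(\bm{D}^+)\leq\mathcal{L}(\bm{D})+\langle\nabla\mathcal{L}(\bm{D}),\bm{D}^+-\bm{D}\rangle+\tfrac{\kappa_t}{2}\Vert\bm{D}^+-\bm{D}\Vert_F^2$ for a gradient step with step size $1/\kappa_t$.

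Next I would verify the feasibility/projection side. The set $\mathbb{S}_D$ constrains each column of each block $\bm{D}^{(j)}$ to the unit Euclidean ball and imposes no coupling across columns, hence it is a nonempty closed convex Cartesian product of unit balls; the Euclidean projection onto it therefore decouples columnwise and coincides with $\mathcal{P}_{\Pi}$. Thus the update is precisely $\bm{D}^+=\mathcal{P}_{\mathbb{S}_D}\!\big(\bm{D}-\kappa_t^{-1}\nabla\mathcal{L}(\bm{D})\big)$, the projected gradient step with constant step size equal to the reciprocal of the Lipschitz constant. With convexity of $\mathcal{L}$, closedness and convexity of $\mathbb{S}_D$, the exact Lipschitz constant, and the exact projection all in place, Theorem 10.21 of \cite{beck2017first} applies verbatim and yields the claimed rate with leading constant $\kappa_t/(2u)$. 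I note that the distance term on the right-hand side should be the \emph{initial} iterate, $\Vert\bm{D}_{t_0}-\bm{D}_t^\ast\Vert_F^2$, as in the standard statement of that theorem.

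The main obstacle here is not analytical but bookkeeping: pinning down the sharp Lipschitz constant $\kappa_t=\Vert\bm{C}_t\bm{C}_t^\top\Vert_2$ (rather than a loose $\Vert\bm{C}_t\Vert_F^2$ surrogate) and confirming that $\mathcal{P}_{\Pi}$ really is the Euclidean projection onto $\mathbb{S}_D$, which hinges on the product structure of the constraint. If a self-contained proof were required, the engine behind the $1/u$ rate is the fundamental prox-grad inequality combined with the monotone decrease of $\mathcal{L}(\bm{D}_{t_u})$ and a telescoping sum over $u$; but given the explicit citation, the result is an off-the-shelf specialization once these hypotheses are checked.
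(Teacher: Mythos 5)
Your proposal is correct and matches the paper's approach exactly: the paper offers no proof of this statement at all, simply attributing it to Theorem 10.21 of \cite{beck2017first}, and your verification of that theorem's hypotheses (convexity of $\mathcal{L}$, the sharp Lipschitz constant $\kappa_t=\Vert\bm{C}_t\bm{C}_t^\top\Vert_2$, and the fact that $\mathcal{P}_{\Pi}$ is the genuine Euclidean projection onto the product-of-unit-balls set $\mathbb{S}_D$) is precisely the bookkeeping needed to justify that citation. Your observation that the right-hand side should read $\Vert\bm{D}_{t_0}-\bm{D}_t^\ast\Vert_F^2$ rather than $\Vert\bm{D}_{t_u}-\bm{D}_t^\ast\Vert_F^2$ is also correct: as printed, the paper's statement deviates from the standard form of the cited theorem, which measures the distance from the \emph{initial} iterate to the optimum.
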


\begin{algorithm}[h!]
\caption{Projected gradient method for $\bm{D}$}
\label{alg.updateD}
\begin{algorithmic}[1]
\REQUIRE
$\bm{X}$, $\bm{C}_t$, $\bm{D}_{t-1}$, $\vartheta$, 
\STATE $\bm{A}=\bm{X}\bm{C}_t^\top$, $\bm{B}=\bm{C}_t\bm{C}_t^\top$, and	$\kappa_t=\Vert \bm{B}\Vert_2$.
\STATE $\bm{D}_{t_0}=\bm{D}_{t-1}$.
	  \FOR{$u=1,2,\ldots,\vartheta$}
		\STATE $\bm{G}=-\bm{A}+\bm{D}_{t_{u-1}}\bm{B}$.
        \STATE $\bm{D}_{t_u}=\mathcal{P}_{\Pi}(\bm{D}_{t_{u-1}}-\bm{G}/\kappa_t)$.
      \ENDFOR
\ENSURE $\bm{D}_t=\bm{D}_{t_\vartheta}$.
\end{algorithmic}
\end{algorithm}

In fact there is no need to solve \eqref{eq.ksc_D} exactly because the problem about $\bm{C}$ \eqref{eq.ksc_C} is not exactly solved. We just set a small value (e.g. 5) for $\vartheta$ to obtain an inexact $\bm{D}_t$ and keep the time complexity low.

\subsection{The overall algorithm of k-FSC}
The entire algorithm of k-FSC is shown in Algorithm \ref{alg.k-FSC}, in which we have set default values for $T$, $\delta$, $\gamma$, $\vartheta$, and $\epsilon$ for convenience.
The space complexity is $O(mn+kmd+kdn)$ mainly caused by the storage for $\bm{X}$, $\bm{D}$, and $\bm{C}$. In the update of $\bm{C}$, the time complexity is $O(kdmn)$ mainly caused by line 9 and k loops of lines 11 and 14 in Algorithm \ref{alg.GS}. The time complexity of Algorithm \ref{alg.Jacobi} is lower than that in Algorithm \ref{alg.GS}. In the update of $\bm{D}$, the time complexity is $O(kdmn+\vartheta k^2d^2m)$ mainly contributed by line 1 and $\vartheta$ loops of line 4 in Algorithm \ref{alg.updateD}. In many real applications, $n\gg m>d$ holds. Thus by assuming $k^2d^2\leq n$, the time complexity in each iteration of Algorithm \ref{alg.k-FSC} is $O(kdmn+\vartheta mn)$. We see that the time complexity and space complexity of k-FSC are linear with the number of data points $n$. The time complexity of the k-means and line 9 in the initialization is much lower than that in computing $\bm{D}$ and $\bm{C}$.

\renewcommand{\algorithmicrequire}{\textbf{Input:}}
\renewcommand{\algorithmicensure}{\textbf{Output:}}
\begin{algorithm}[h!]
\caption{k-FSC}
\label{alg.k-FSC}
\begin{algorithmic}[1]
\REQUIRE
$\bm{X}$, $k$; $d$, $\lambda$; $T(200)$, $\delta(0.95)$, $\gamma(1)$, $\vartheta(5)$, $\epsilon(10^{-4})$; $t=0$, $\bm{\Delta}=\bm{0}$.
\STATE Normalize the columns of $\bm{X}$ to have unit $\ell_2$ norm
\STATE Generate $\bm{D}_0$ randomly or by $k$-means.
\STATE$\bm{C}_0=(\bm{D}_0^\top \bm{D}_0+\hat{\lambda}\bm{I})^{-1}\bm{D}_0^\top \bm{X}$.
\REPEAT
\STATE $t\leftarrow t+1$.
\STATE Obtain $\bm{C}_t$ using Algorithm \ref{alg.GS} or Algorithm \ref{alg.Jacobi}. 
\STATE Obtain $\bm{D}_t$ using Algorithm \ref{alg.updateD}.
\UNTIL{$\max\left(\tfrac{\Vert\bm{C}_t-\bm{C}_{t-1}\Vert_F}{\Vert\bm{C}_{t-1}\Vert_F},\tfrac{\Vert\bm{D}_t-\bm{D}_{t-1}\Vert_F}{\Vert\bm{D}_{t-1}\Vert_F}\right)\leq\epsilon$ or $t=T$}
\STATE Identify the clusters by \eqref{eq.cluster_assign_0} or \eqref{eq.cluster_assign2}.
\ENSURE $k$ clusters of $\bm{X}$.
\end{algorithmic}
\end{algorithm}

\begin{figure}[h!]
\centering
\includegraphics[width=5cm]{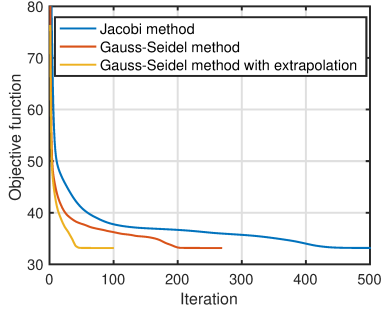}
\caption{Convergence performance of k-FSC (Algorithm \ref{alg.k-FSC}) with different solvers for $\bm{C}$ in clustering a synthetic dataset.}\label{fig_optcurve}
\end{figure}

Figure \ref{fig_optcurve} compares the convergence performance of k-FSC with different solvers for $\bm{C}$ on a synthetic dataset (see Section \ref{sec_exp_syn}). The  Gauss-Seidel method with extrapolation for $\bm{C}$ provides faster convergence than other methods, though Jacobi method can be implemented parallelly for large dataset with large $k$. The following theorem provides convergence guarantee for Algorithm \ref{alg.k-FSC}. 

\begin{theorem}\label{the_convergence}
Let $\mathcal{F}(\bm{C},\bm{D})= \dfrac{1}{2}\Vert \bm{X}-\bm{D}\bm{C}\Vert_F^2+\lambda\sum_{j=1}^k\Vert\bm{C}^{(j)}\Vert_{2,1}$. In Algorithm \ref{alg.k-FSC} (with Algorithm \ref{alg.GS} for $\bm{C}$), for any $\delta<1$, $\gamma\geq1$, and $\vartheta\geq 1$, we have
\begin{equation*}
\begin{aligned}
&\lim_{t\rightarrow\infty}\Vert \bm{C}_{t-1}-\bm{C}_{t}\Vert_F=0,\quad \lim_{t\rightarrow\infty}\Vert \bm{D}_{t-1}-\bm{D}_{t}\Vert_F=0,\\
&\lim_{t\rightarrow \infty} \mathcal{F}(\bm{C}_{t-1},\bm{D}_{t-1})-\mathcal{F}(\bm{C}_t,\bm{D}_t)= 0.
\end{aligned}
\end{equation*}
\end{theorem}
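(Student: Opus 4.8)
The plan is to follow the standard descent-plus-Lyapunov argument for block coordinate descent with extrapolation in the spirit of \cite{xu2013block}, built around a sufficient-decrease inequality for each of the two update stages. I would first record that the iterates stay in a bounded region: $\bm{D}_t\in\mathbb{S}_D$ is compact, and because the objective is non-increasing (once the descent inequalities below are in hand) the values $\lambda\sum_j\Vert\bm{C}_t^{(j)}\Vert_{2,1}$, and hence $\Vert\bm{C}_t\Vert_F$, are bounded. Consequently the step-size parameters satisfy $\tau_{j,t}=\gamma\Vert\bm{D}_{t-1}^{(j)}\Vert_2^2\le\gamma\hat{d}$ and $\kappa_t=\Vert\bm{C}_t\bm{C}_t^\top\Vert_2\le\Vert\bm{C}_t\Vert_F^2$, i.e. they lie in a bounded interval; the matching positive lower bounds (no dictionary block or $\bm{C}$ collapses) are what will later let me pass from ``$\tau_{j,t}\Vert\bm{C}_t^{(j)}-\bm{C}_{t-1}^{(j)}\Vert_F^2\to0$'' to ``$\Vert\bm{C}_t-\bm{C}_{t-1}\Vert_F\to0$''.

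For the $\bm{C}$-update I would exploit that each $\mathcal{L}(\bm{C}^{(j)})$ is a convex quadratic whose gradient is $L_{j,t}=\Vert\bm{D}_{t-1}^{(j)}\Vert_2^2$-Lipschitz, and that the prox-linear step uses $\tau_{j,t-1}=\gamma L_{j,t}\ge L_{j,t}$, so the surrogate $\hat{\mathcal{L}}$ majorizes $\mathcal{L}$. Combining the descent lemma, the strong convexity / optimality of the soft-thresholding subproblem \eqref{eq.ksc_CCC}, and convexity of $\mathcal{L}$ (comparing the new iterate $\bm{C}_t^{(j)}$ against the un-extrapolated $\bm{C}_{t-1}^{(j)}$) would yield a per-block bound of the form
\[\text{(obj.\ after block }j)\le\text{(obj.\ before block }j)+\tfrac{\tau_{j,t-1}\eta_{j,t-1}^2}{2}\Vert\bm{C}_{t-1}^{(j)}-\bm{C}_{t-2}^{(j)}\Vert_F^2-\tfrac{\tau_{j,t-1}}{2}\Vert\bm{C}_t^{(j)}-\bm{C}_{t-1}^{(j)}\Vert_F^2,\]
where the choice $\eta_{j,t-1}=\delta\sqrt{\tau_{j,t-2}/\tau_{j,t-1}}$ collapses the middle coefficient to $\delta^2\tau_{j,t-2}/2$. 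The $\bm{D}$-update is projected gradient on the convex set $\mathbb{S}_D$ for the convex quadratic $\tfrac12\Vert\bm{X}-\bm{D}\bm{C}_t\Vert_F^2$ with step $1/\kappa_t$; summing the standard proximal-gradient decrease over the $\vartheta$ inner steps and applying Cauchy--Schwarz gives $\mathcal{F}(\bm{C}_t,\bm{D}_t)\le\mathcal{F}(\bm{C}_t,\bm{D}_{t-1})-\tfrac{\kappa_t}{2\vartheta}\Vert\bm{D}_t-\bm{D}_{t-1}\Vert_F^2$.

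I would then define the Lyapunov function $\mathcal{H}_t=\mathcal{F}(\bm{C}_t,\bm{D}_t)+\sum_j\tfrac{\delta^2\tau_{j,t-1}}{2}\Vert\bm{C}_t^{(j)}-\bm{C}_{t-1}^{(j)}\Vert_F^2$ and chain the two stage inequalities; the previous-step correction terms cancel up to the factor $\delta^2$, leaving
\[\mathcal{H}_{t}-\mathcal{H}_{t-1}\le-\sum_{j}\tfrac{(1-\delta^2)\tau_{j,t-1}}{2}\Vert\bm{C}_t^{(j)}-\bm{C}_{t-1}^{(j)}\Vert_F^2-\tfrac{\kappa_t}{2\vartheta}\Vert\bm{D}_t-\bm{D}_{t-1}\Vert_F^2.\]
Since $\delta<1$ and $\gamma,\vartheta\ge1$, $\mathcal{H}_t$ is non-increasing, and it is bounded below by $0$ because $\mathcal{F}\ge0$; hence $\mathcal{H}_t$ converges and the summed right-hand side is finite. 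This forces $\tau_{j,t-1}\Vert\bm{C}_t^{(j)}-\bm{C}_{t-1}^{(j)}\Vert_F^2\to0$ and $\kappa_t\Vert\bm{D}_t-\bm{D}_{t-1}\Vert_F^2\to0$; the positive lower bounds on $\tau_{j,t-1}$ and $\kappa_t$ give the first two limits, and the third follows because $\mathcal{F}(\bm{C}_{t-1},\bm{D}_{t-1})-\mathcal{F}(\bm{C}_t,\bm{D}_t)$ equals $\mathcal{H}_{t-1}-\mathcal{H}_t$ plus correction terms that each tend to $0$.

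The main obstacle is the bookkeeping inside the $\bm{C}$-sweep: because Algorithm \ref{alg.GS} pre-extrapolates all blocks before the Gauss--Seidel pass, when I move from ``after block $j-1$'' to ``before block $j$'' the block-$j$ coordinate sits at its extrapolated value $\hat{\bm{C}}_{t-1}^{(j)}$ rather than at $\bm{C}_{t-1}^{(j)}$, so the per-block inequalities do not telescope cleanly into the full objective. Reconciling this mismatch requires bounding the residual cross terms---which are controlled by $\eta_{j,t-1}\Vert\bm{C}_{t-1}^{(j)}-\bm{C}_{t-2}^{(j)}\Vert_F$---via Young's inequality and absorbing them into the slack produced by $\delta<1$, while carrying the constants so that the net coefficient on $\Vert\bm{C}_t^{(j)}-\bm{C}_{t-1}^{(j)}\Vert_F^2$ stays strictly negative. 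The secondary technical point is justifying that $\tau_{j,t-1}$ and $\kappa_t$ stay bounded below by positive constants (equivalently, that the iterates avoid degenerate configurations with a vanishing dictionary block or $\bm{C}$), which is exactly what upgrades the weighted limits into the stated unweighted ones.
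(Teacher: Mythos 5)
Your proposal follows essentially the same route as the paper's own proof: the per-block extrapolated prox-descent inequality you derive is exactly what the paper obtains from Lemma \ref{lem_xu2.1} (Xu--Yin), your $\bm{D}$-stage bound is the paper's Lemma \ref{lem_D_inner} (Beck) summed over the $\vartheta$ inner steps (with a Cauchy--Schwarz step the paper skips by keeping the inner-step terms), and your Lyapunov function $\mathcal{H}_t$ is a cosmetic repackaging of the paper's direct telescoping in \eqref{eq.dif_obj}. The two ``obstacles'' you flag at the end are genuine, but be aware that the paper does not resolve them either: its telescoping identity $\mathcal{F}(\bm{C}_{t-1},\bm{D}_{t-1})-\mathcal{F}(\bm{C}_{t},\bm{D}_{t-1})=\sum_{j}\bigl[\mathcal{F}_j(\bm{C}^{(j)}_{t-1})-\mathcal{F}_j(\bm{C}^{(j)}_{t})\bigr]$ tacitly assumes that not-yet-updated blocks sit at their old values $\bm{C}^{(l)}_{t-1}$, whereas Algorithm \ref{alg.GS} pre-extrapolates every block before the sweep, and the uniform positive lower bounds on $\tau_{j,t}$ and $\kappa_t$ needed to pass from weighted to unweighted limits are simply asserted (``$\bar{\tau}>0$ according to its definition'') rather than proved. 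So your plan reproduces the paper's argument while being more candid about exactly where its unfilled gaps are.
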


\section{Extensions of  k-FSC}\label{sec_4}

%\subsection{Clustering guarantee}
%$$\lambda\leq\min_{1\leq i\leq n}\max_{1\leq j\leq k}\Vert {\bm{D}^{(j)}}^\top\bm{x}_i\Vert.$$
%$$\lambda\leq\min_{1\leq i\leq n}\max_{1\leq j\leq k}\Vert {\bm{Q}^{(j)}}^\top{\bm{U}^{(j)}}^\top\bm{x}_i\Vert.$$
%for $j\in [k]$
%$$\lambda>\max_{1\leq j\leq k}\Vert {\bm{Q}^{(l)}}^\top{\bm{U}^{(l)}}^\top\bm{x}_i\Vert, \bm{x}_i\in c_j, l\neq j.$$
%$\bm{Q}^{(l)}\in\mathbb{R}^{d_0\times d}$, $\bm{U}^{(l)}\in\mathbb{R}^{m\times d}$.

\subsection{Online and mini-batch optimizations}
In many real cases, we observe data points sequentially, which can be well exploited by online learning. In addition, online learning often has low memory cost and low per-iteration time complexity and hence is able to handle very large datasets. K-FSC can be extended to online clustering or solved by mini-batch optimization. Specifically, given a mini-batch data $\bm{X}_i\in\mathbb{R}^{m\times b}$ at time $i$, we update $\bm{D}$ by inexactly solving the following problem
\begin{equation}\label{eq.ksc_online}
\begin{aligned}
\mathop{\textup{minimize}}_{\bm{D}\in\mathbb{S}_D,\bm{C}_i}&\ \ \dfrac{1}{2}\Vert \bm{X}_i-\bm{D}\bm{C}_i\Vert_F^2+\lambda\sum_{j=1}^k\Vert\bm{C}_i^{(j)}\Vert_{2,1}.
\end{aligned}
\end{equation}
When $b=1$, the problem is exactly an online optimization problem.
The complete algorithm is shown in Algorithm \ref{alg.k-FSC_online}.
\begin{algorithm}[h!]
\caption{Mini-Batch k-FSC (k-FSC-MB)}
\label{alg.k-FSC_online}
\begin{algorithmic}[1]
\REQUIRE
Sequential data or randomly partitioned data $\bm{X}_1,\bm{X}_2,\ldots,\bm{X}_i\in\mathbb{R}^{m\times b}$, $k$, $d$, $\lambda$, $\lambda'$, $T_C$, $T_D$.
\STATE Initialize $\bm{D}$ randomly or use k-means.
	  \FOR{$i=1,2,\ldots$}
	  \STATE  Initialize $\bm{C}_i$
	  \STATE  Update $\bm{C}_i$ by performing Algorithm \ref{alg.GS} for $T_C$ times.
		\STATE  Update $\bm{D}$ by performing Algorithm \ref{alg.updateD} with $\vartheta=T_D$.
		\STATE  Cluster (similar to lines 8 and 9) if necessary.
      \ENDFOR
\STATE Repeat lines $2,4,5,6$ for $p$ times if necessary.
\STATE  $\bm{C}^{(j)}=({\bm{D}^{(j)}}^\top\bm{D}^{(j)}+\lambda'\bm{I})^{-1}{\bm{D}^{(j)}}^\top\bm{X}$,\ $j\in [k]$.
\STATE $\bm{x}_i\in c_j,\quad  j=\textup{argmin}_j\ \Vert \bm{x}_i-\bm{D}^{(j)}\bm{C}_{:i}^{(j)}\Vert^2,\ i\in[n]$.
\ENSURE $k$ clusters of $\bm{X}$.
\end{algorithmic}
\end{algorithm}

\vspace{-4pt}
\subsection{Cluster arbitrarily large dataset}
Though the time and space complexity of k-FSC are linear with $n$, an extremely large $n$ (e.g. $n\geq10^6$) will still lead to high computational cost. In that case, we propose to perform k-FSC on a few landmark data points generated by k-means and then perform classification. The method is detailed in Algorithm \ref{alg.k-FSC_large}. The time complexity per iteration of line 2 (i.e. Algorithm \ref{alg.k-FSC}) is $O(kdms+\vartheta k^2d^2m)$. The time complexity of line 3 and line 4 is $O(kdmn)$. 

\begin{algorithm}[h!]
\caption{k-FSC for arbitrarily large dataset (k-FSC-L)}
\label{alg.k-FSC_large}
\begin{algorithmic}[1]
\REQUIRE
$\bm{X}\in\mathbb{R}^{m\times n}$, $s\ll n$, $\lambda'$ (e.g. $10^{-5}$).
\STATE Let $\bm{X}_s$ consist of the $s$ centers of k-means on $\bm{X}$.
\STATE Run Algorithm \ref{alg.k-FSC} on  $\bm{X}_s$ to obtain $\bm{D}$.
\STATE $\bm{C}^{(j)}=({\bm{D}^{(j)}}^\top\bm{D}^{(j)}+\lambda'\bm{I})^{-1}{\bm{D}^{(j)}}^\top\bm{X}$,\ $j\in [k]$.
\STATE $\bm{x}_i\in c_j,\quad  j=\textup{argmin}_j\ \Vert \bm{x}_i-\bm{D}^{(j)}\bm{C}_{:i}^{(j)}\Vert^2,\ i\in[n]$.
\ENSURE $k$ clusters of $\bm{X}$.
\end{algorithmic}
\end{algorithm}

\vspace{-4pt}
\subsection{Complexity comparison}
We analyze the time and space complexity of a few baseline methods. Shown in Table \ref{tab_complexity}, the space complexity of Nystr{\"o}m-ort \cite{fowlkes2004spectral}, k-PC \cite{agarwal2004k}, k-FSC, and k-FSC-L are much lower than other methods when $n\gg m$. The space complexity of LSC-K \cite{chen2011large} and RPCM-F$^2$ \cite{9292470} increase quickly when $s$ becomes larger. 

For extremely large dataset, in order to achieve high clustering accuracy, we often need a large enough $s$ to exploit sufficient information of the dataset. In k-FSC-L,  the complexity is linear with $s$, which means we may obtain high clustering accuracy by k-FSC-L on extremely large datasets. In contrast, the time complexity of Nystr{\"o}m-ort, LSC-K \cite{chen2011large}, SSSC \cite{peng2013scalable}, RPCM-F$^2$ \cite{9292470}, S$^5$C \cite{matsushima2019selective}, and S$^3$COMP-C \cite{chen2020stochastic}, are at least quadratic with $s$, which prevents their applications in large-scale clustering demanding high accuracy.

%\begin{table*}[h!]
%\centering
%\caption{Comparison about time and space complexity ($\rho<1$: proportion of nonzero entries; $k$: number of clusters; $s$: number of selected samples; $b>1$, $\epsilon<1$; $\delta<1$, e.g. $0.8$; $\vartheta\geq 1$, e.g. $5$).}\label{tab_complexity}
%\begin{tabular}{l|l|l|l}
%\hline
%& \multirow{2}{*}{Space complexity} & \multicolumn{2}{c}{Time complexity} \\  \cline{3-4}
%& & Iterative & Fixed \\ \hline
%k-PC \cite{agarwal2004k} & $O(mn+kmd)$ 	& $O(dmn+kdm^2+m^2n)$ & ---\\
%SSC \cite{SSC_PAMIN_2013} &$O(mn+\rho n^2)$ 	& $O(mn^2)$& $O(k\rho n^2)$\\
%Nystr{\"o}m \cite{fowlkes2004spectral} & $O(mn)$	& --- &$O(msn+s^3)$\\
%LSC-K \cite{chen2011large}& $O(mn+sn)$	&  --- &$O(msn+s^2n)$\\
%SSSC \cite{peng2013scalable} & $O(mn+\rho s^2)$		& $O(ms^3+k^2s)$ & $O(s^2n)$\\
%RPCM-F$^2$ \cite{9292470} & $O(mn+s^2+sn)$ 	& $O(ms^2)$ & $O(msn+s^2n)$ \\
%S$^5$C \cite{matsushima2019selective}& $O(mn+\rho n^2)$ 	& --- & $O(bms^2+msn+ksn\log\tfrac{1}{\epsilon})$\\
%{\small S$^3$COMP-C} \cite{chen2020stochastic} & $O(mn+\rho n^2)$ & $O(m\rho n^3(1-\delta))$ &$O(k\rho n^2)$\\ \hline
%k-FSC &$O(mn+kmd+kdn)$ & $O(kdmn+\vartheta mn)$&$O(kdmn)$\\
%k-FSC-MB &$O(mb+kmd+kdb)$ &$O(kdmb+\vartheta k^2d^2m)$&$O(kdmn)$\\
%k-FSC-L &$O(ms+kmd+kds)$ &$O(kdms+\vartheta k^2d^2m)$&$O(kdmn)$\\ \hline
%\end{tabular}
%\end{table*}

\begin{table}[h!]
\centering
\caption{Time and space complexity ($\rho<1$: proportion of nonzero entries; $k$: number of clusters; $s$: number of selected samples; $b>1$, $\epsilon<1$; $\delta<1$, e.g. $0.8$; $\vartheta\geq 1$, e.g. $5$).}\label{tab_complexity}
\begin{tabular}{l|c|c|c}
\hline
& \multirow{2}{*}{{\small Space complexity}} & \multicolumn{2}{c}{Time complexity} \\  \cline{3-4}
& & Iterative & Fixed \\ \hline
k-PC \cite{agarwal2004k} & $O(mn$+$kmd)$ 	& \makecell{$O(dmn$+\\$kdm^2$+$m^2n)$} & ---\\
SSC \cite{SSC_PAMIN_2013} &$O(mn$+$\rho n^2)$ 	& $O(mn^2)$& $O(k\rho n^2)$\\
{\small Nystr{\"o}m \cite{fowlkes2004spectral}} & $O(mn)$	& --- &$O(msn$+$s^3)$\\
LSC-K \cite{chen2011large}& $O(mn$+$sn)$	&  --- &$O(msn$+$s^2n)$\\
SSSC \cite{peng2013scalable} & $O(mn$+$\rho s^2)$		& $O(ms^3$+$k^2s)$ & $O(s^2n)$\\
{\footnotesize RPCM-F$^2$ \cite{9292470}} & $O(mn$+$s^2$+$sn)$ 	& $O(ms^2)$ & $O(msn$+$s^2n)$ \\
S$^5$C \cite{matsushima2019selective}& $O(mn$+$\rho n^2)$ 	& --- & \makecell{$O(bms^2$+$msn$\\+$ksn\log\tfrac{1}{\epsilon})$}\\
{\footnotesize S$^3$COMP-C\cite{chen2020stochastic}} & $O(mn$+$\rho n^2)$ & $O(m\rho n^3(1-\delta))$ &$O(k\rho n^2)$\\ \hline
k-FSC &\makecell{$O(mn$+$kmd$\\+$kdn)$} & \makecell{$O(kdmn$\\+$\vartheta mn)$}&$O(kdmn)$\\
k-FSC-MB &\makecell{$O(mb$+$kmd$\\+$kdb)$ }&\makecell{$O(kdmb$\\+$\vartheta k^2d^2m)$}&$O(kdmn)$\\
k-FSC-L &\makecell{$O(ms$+$kmd$\\+$kds)$} &\makecell{ $O(kdms$\\+$\vartheta k^2d^2m)$}&$O(kdmn)$\\ \hline
\end{tabular}
\end{table}

\subsection{Sparse noise, outliers, and missing data}
In real applications, sparse noise, outliers, and missing data are not uncommon \cite{FAN201736,fan2019factor}. With slight modification from model \eqref{eq.ksc_2}, k-FSC is able to handle sparse noise, outliers, or/and missing data. 
For instance, the following model is robust to sparse noise or outliers
\begin{equation}\label{eq.ksc_outliers}
\begin{aligned}
\mathop{\textup{minimize}}_{\bm{D}\in\mathbb{S}_D,\bm{C}}&\ \ \dfrac{1}{2}\Vert \bm{X}-\bm{D}\bm{C}-\bm{E}\Vert_F^2+\lambda\sum_{j=1}^k\Vert\bm{C}^{(j)}\Vert_{2,1}+\beta \mathcal{R}(\bm{E}),
\end{aligned}
\end{equation}
where $\mathcal{R}(\bm{E})=\Vert \bm{E}\Vert_1$ or $\Vert \bm{E}\Vert_{2,1}$.
The following model is able to perform clustering and missing data imputation simultaneously.
\begin{equation}\label{eq.ksc_miss}
\begin{aligned}
\mathop{\textup{minimize}}_{\bm{D}\in\mathbb{S}_D,\bm{C}}&\ \ \dfrac{1}{2}\Vert \bm{M}\odot(\bm{X}-\bm{D}\bm{C})\Vert_F^2+\lambda\sum_{j=1}^k\Vert\bm{C}^{(j)}\Vert_{2,1},
\end{aligned}
\end{equation}
where $\odot$ denotes the Hadamard product and $\bm{M}$ is a binary matrix with $1$ for observed entries and $0$ for missing entries. The missing entries can be obtained from $\bm{D}\bm{C}$. The optimizations for \eqref{eq.ksc_outliers} and \eqref{eq.ksc_miss} can be adapted from Algorithm \ref{alg.k-FSC} and will not be detailed here.

\section{Connection with previous work}\label{sec_5}
The proposed k-FSC has a connection with nonnegative matrix factorization (NMF) \cite{lee2001algorithms}.
It is known that k-means clustering can be formulated as NMF \cite{ding2005equivalence}. Therefore, both NMF and k-FSC factorize the data into k clusters directly. The difference is that NMF aims to find the cluster centers while k-FSC aims to find the subspaces. 

K-FSC is also closely related to the k-plane clustering (k-PC) \cite{bradley2000k,he2016robust}, 
which aims to minimize the sum of residuals of data points to their assigned subspace.
An efficient method to solve  k-PC is performing assignment and learn the subspace bases alternately: cluster the data points by their nearest subspaces and update the subspace bases by PCA on the data points in each cluster. K-PC is sensitive to initialization \cite{he2016robust},  subspace dimension estimation, missing data, and outliers \cite{gitlin2018improving}.

The model of k-FSC can be regarded as a variant of dictionary learning and sparse coding (DLSC) \cite{mairal2009online}. Similar to \cite{szabo2011online,5539964,suo2014group}\footnote{Structured dictionary was also considered in compressed sensing \cite{eldar2010block} but the dictionary is not unknown in that case.}, k-FSC also considers structured dictionary. It is worth pointing out that, in these previous work, the regularization on the coefficients matrix is $\ell_1$ norm. In contrast, k-FSC puts $\ell_{21}$ norm on the $k$ sub-matrices of the coefficients matrix to make it be group-sparse, which enables us to factorize the data matrix into $k$ groups directly. In \cite{sprechmann2010dictionary}, the authors proposed to perform DLSC and clustering alternately, which is-time consuming and not applicable to large datasets.

\section{Experiments}\label{sec_exp}

\subsection{Synthetic data}\label{sec_exp_syn}
This paper generates\footnote{All experiments in this paper are conducted in MATLAB on a MacBook Pro with 2.3 GHz Intel Core i5 and 8 GB RAM.} synthetic data $\bm{X}=[\bm{X}^{(1)}, \ldots, \bm{X}^{(k)}]$ by
$\bm{X}^{(j)}=(\alpha\bm{A}_0+\bm{A}^{(j)})\bm{B}^{(j)}$. Here
$\bm{A}^{(j)}\in\mathbb{R}^{m\times d_0}$ and $\bm{B}^{(j)}\in\mathbb{R}^{d_0\times n_0}$ are drawn from $\mathcal{N}(0,1)$, $j\in [k]$. $\bm{A}_0$ is a random matrix drawn from $\mathcal{N}(0,1)$ and $\alpha$ controls the similarity between pair-wise subspaces. We also add random noise to $\bm{X}$: $\hat{\bm{X}}=\bm{X}+\bm{E}$, where $\bm{E}$ is drawn from $\mathcal{N}(0,(\beta\sigma_x)^2)$, $\sigma_x$ denotes the standard deviation of the entries in $\bm{X}$, and $\beta$ controls the noise level.  We set $k=5$, $m=25$, $d_0=5$, $n_0=50$, and $\alpha=1$.

\begin{figure}[h!]
\centering
\includegraphics[width=8.5cm]{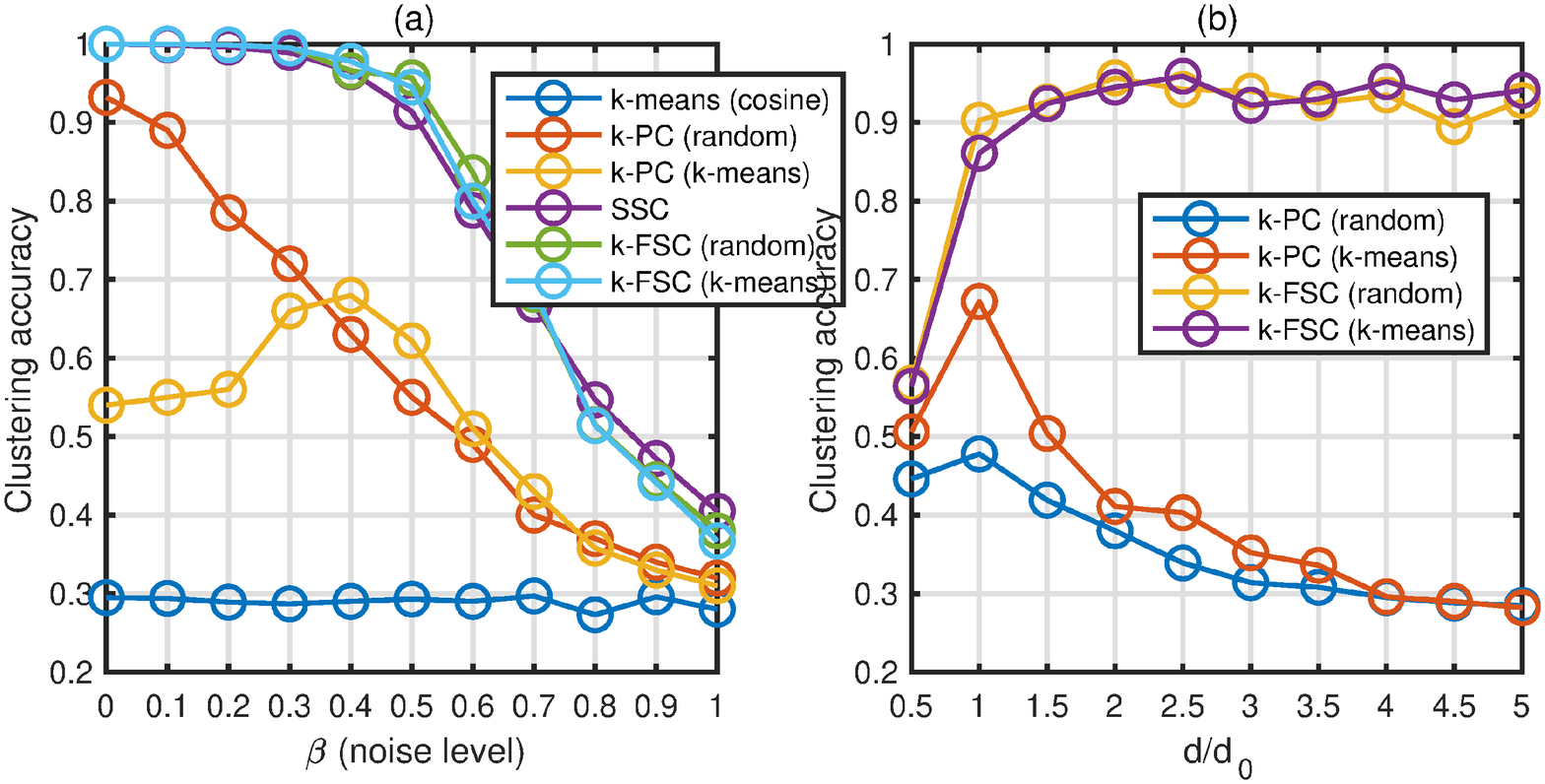}
\caption{Clustering accuracies of k-means, k-PC, SSC, and k-FSC (with random or k-means initialization): (a) different noise level; (b) k-PC and k-FSC with different estimation ($d$) of subspace dimension, where $\beta=0.5$.}\label{fig_syn_noise_d}
\end{figure}

Figure \ref{fig_syn_noise_d}(a) shows the clustering accuracy (average of 50 trials) of k-means (cosine distance), k-PC  \cite{agarwal2004k} with $d=d_0$, SSC \cite{SC_tutorial2011}, and the proposed k-FSC ($d=2d_0$) in the cases of different noise level. Random initialization and k-means (cosine distance) initialization for k-PC and k-FSC are also compared. We see that k-means failed in all cases though we have used cosine as a distance metric. K-FSC outperformed SSC when the noise level was relatively large; they outperformed k-PC in all cases. Note that in this study, as k-means failed, initialization by k-means provided no significant improvement compared to random initialization.
Figure \ref{fig_syn_noise_d} (b) presents the influence of $d$ in k-PC and k-FSC when $\beta=0.5$.  We see that k-PC requires $d$ be equal to the true dimension $d_0$, otherwise the clustering accuracy decreases quickly when $d$ increases. In contrast, k-FSC is not sensitive to $d$, even when $d$ is five times of the true dimension of the subspaces. In addition, k-FSC is also not sensitive to $\lambda$, which can be found in Appendix \ref{app_DL}.

%\subsection{Robust to sparse noise and missing data}
To test the clustering performance of k-FSC when the data are corrupted by sparse noise, we use $\hat{\bm{X}}=[\bm{X}^{(1)}, \ldots, \bm{X}^{(k)}]+\bm{E}+\bm{F}$, where $\bm{E}$ was defined previously and $\bm{F}$ is a sparse matrix whose nonzero entries are drawn from $\mathcal{N}(0,\sigma_x^2)$. We let $\beta=0.1$ and increase the proportion of the nonzero entries (noise density of sparse noise) of $\bm{F}$ from $0$ to $0.5$. The clustering accuracy of k-PC, SSC, and k-FSC are reported in Figure \ref{Fig_syn_SM}(a). We see that k-PC is very vulnerable to the sparse noise. Compared to SSC, k-FSC is more robust to the sparse noise and the clustering accuracy is always higher than 0.9 when the noise density is no larger than 0.4.

\begin{figure}[h!]
\centering
\includegraphics[width=8.5cm,trim={30 0 40 0},clip]{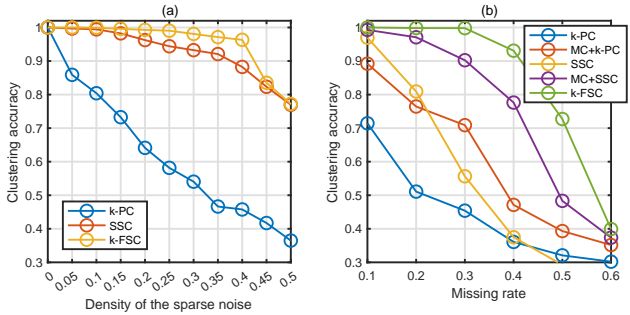}
\caption{(a) sparse noise; (b) missing data.}\label{Fig_syn_SM}
\end{figure}

We randomly remove a fraction (missing rate) of the entries of the data matrix ($\beta=0.1$) and test the performance of k-PC, SSC, and k-FSC. In k-PC and SSC, we fill the missing entries with zero. We also use low-rank matrix completion \cite{CandesRecht2009} to recover the matrix and then perform k-PC and SSC. The clustering accuracy of 50 repeated trials are reported in Figure \ref{Fig_syn_SM}(b). We see matrix completion has improved the clustering accuracy of k-PC and SSC. Nevertheless,  k-FSC has the highest clustering accuracy in all cases. It is worth mentioning that the data matrix is full-rank and hence cannot be well recovered by low-rank matrix completion. That's why the proposed method outperformed MC+k-PC and MC+SSC.

\subsection{Real data}
We compare k-FSC (Algorithm \ref{alg.k-FSC}), k-FSC-MB (Algorithm \ref{alg.k-FSC_online}), and k-FSC-L (Algorithm \ref{alg.k-FSC_large}) with k-means (cosine similarity), k-PC \cite{agarwal2004k}, SSC  \cite{SSC_PAMIN_2013}, Nystr\"om-orth \cite{fowlkes2004spectral}, LSC-K \cite{chen2011large}, SSSC \cite{peng2013scalable}, RPCM-F$^2$ \cite{9292470}, S$^5$C \cite{matsushima2019selective}, and S$^3$COMP-C \cite{chen2020stochastic}. We use the MATLAB codes shared by their authors. The evaluation are conducted on the following six datasets.
\textbf{MNIST}: \cite{lecun1998gradient} 70,000 gray images ($28\times 28$) of handwritten digits. Similar to \cite{chen2020stochastic}, for each image, we use the scattering convolution network \cite{bruna2013invariant} to generate a feature vector of dimension $3472$ further reduced to 150 by PCA (use the first 150 right singular vectors of the matrix).
\textbf{Fashion-MNIST}: \cite{xiao2017fmnist} 70,000 gray images ($28\times 28$) of 10 types of fashion product. The preprocessing is the same as that for MNIST.
\textbf{Epileptic}: \cite{andrzejak2001indications} EEG data with 178 features  and 11,500 samples in 5 classes. We reduced the feature dimension to 50 by PCA (use the first 50 right singular vectors of the matrix).
\textbf{Motion Capture Hand Postures}: A UCI \cite{Dua:2019} dataset with 38 features and 78,095 samples in 5 classes.
\textbf{Covtype}:  A UCI \cite{Dua:2019} dataset with 54 features and 581,012 samples in 7 classes.
\textbf{PokerHand}:  A UCI \cite{Dua:2019} dataset with 10 features and 1,000,000 samples in 10 classes. All data are normalized to have unit $\ell_2$ norm. 
\begin{figure}[h]
\centering
\includegraphics[width=8cm,trim={100 20 100 5},clip]{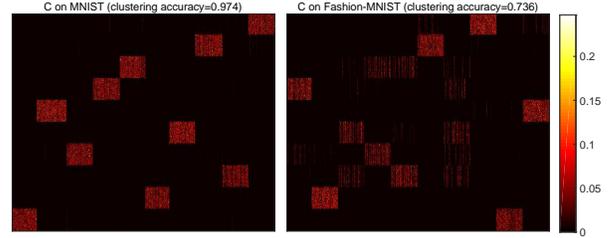}
\caption{Visualization of $\vert\bm{C}\vert$ given by k-FSC.}\label{fig_real_C}
\end{figure}
\begin{table}[h!]
\centering
\caption{Performance on MNIST and Fashion-MNIST}\label{tab_MNIST}
\begin{tabular}{c|l|c|c|c}
\hline 
&	& ACC ($\%$) & NMI ($\%$) & Time (s) \\ \hline \hline
\multirow{11}{*}{\rotatebox{90}{MNIST}} &k-means & 95.72$\pm$3.38 & 91.46$\pm$3.84 & 28.7 \\ 
&k-PC & 87.67$\pm$6.75 & 83.21$\pm$4.75 & 41.9 \\ 
&Nystr\"om & 78.56$\pm$7.13 & 76.49$\pm$3.80 & 60.3\\ 
&LSC-K & 95.83$\pm$1.09 & 90.91$\pm$0.78 & 296.2\\ 
&SSSC &82.93$\pm$0.39 & 83.44$\pm$0.53 & 127.7 \\ 
&NCSC & \underline{94.09} &\underline{86.12} & {\scriptsize Need GPU} \\
&RPCM-F$^2$ & 96.95$\pm$0.19 &91.87$\pm$0.31 & 54.5 \\ 
&S$^5$C & 94.86$\pm$1.37 & 89.85$\pm$1.13 & 291.6\\ 
&{\small S$^3$COMP-C}& \underline{96.32} &\slash & \slash \\ \cline{2-5}
&k-FSC &\textbf{97.24}$\pm$0.02 &\textbf{92.58}$\pm$0.05  & 335.9 \\ 
&k-FSC-MB& 97.13$\pm$0.04 & 92.30$\pm$0.09 & 55.2\\
&k-FSC-L & \textbf{97.48}$\pm$0.31 & \textbf{93.45}$\pm$0.45 & 36.9 \\ 
\hline \hline
\multirow{11}{*}{\rotatebox{90}{Fashion-MNIST}}&k-means & 65.51$\pm$5.05 & 65.23$\pm$3.03 & 33.4 \\ 
&k-PC & 61.88$\pm$6.31 & 60.78$\pm$4.01 & 48.8\\ 
&Nystr\"om & 54.62$\pm$3.67 & 48.33$\pm$1.40 & 60.2\\ 
&LSC-K & 63.27$\pm$2.77 & 65.60$\pm$1.65 & 290.3\\ 
&SSSC &57.90$\pm$1.48 & 60.69$\pm$0.88 & 121.4 \\ 
&NCSC & \underline{72.14} &\underline{68.60} & {\scriptsize Need GPU} \\
&RPCM-F$^2$ & 65.98$\pm$3.19 &67.23$\pm$1.95 & 55.8 \\ 
&S$^5$C & 63.13$\pm$1.63 & 66.38$\pm$1.34 & 297.2\\ 
&{\small S$^3$COMP-C}& 59.88$\pm$2.19 &65.00$\pm$0.17 & $\widetilde{762.6}$\\ \cline{2-5}
&k-FSC &\textbf{72.73}$\pm$3.13 &\textbf{69.24}$\pm$1.94 & 527.5 \\
&k-FSC-MB & \textbf{71.51}$\pm$4.08 & 68.08$\pm$3.09 & 58.5 \\ 
&k-FSC-L & 69.70$\pm$4.32 & \textbf{68.23}$\pm$2.45 & 57.7 \\ 
\hline
\end{tabular}
\end{table}

The following parameter settings are used for the six datasets. 
In k-PC, we set $d=6,10,6,2,5,3$. In Nystr\"om-orth, $\sigma=$0.25,
0.5,0.3,0.5,0.2,1 and $s=3000$,3000,2000,1500,1500,1000. In LSC-K, $r=5,4,5,5,5,5,$ and $s=3000$, 3000,2000,1500,1000,1000. In SSSC, $\lambda=10^{-4}$,$10^{-2}$, $10^{-1}$,$10^{-1}$, $10^{-1}$ and $s=$3000, 3000,
1500,1500,3500. In RPCM-F$^2$, $\beta=0.1,0.1,10,0.5,0.1$ and $s=3000$,3000,1500,1500,1000. In S$^5$C, $\lambda=0.2,0.2,0.3,0.2$ and $s=3000,3000,1500,1500$. In S$^3$COMP-C, $T=10$, $\beta=0.8$, and $\lambda=1,1,0.4,0.5$ for the first four datasets. 

In k-FSC, k-FSC-MB, and k-FSC-L, for the six datasets, we set $d=30$,30,30,30,20,5, $\lambda=0.5$,0.5,0.5,0.2,0.4,0.1, and $\lambda'=10^{-5}$.
In k-FSC-MB, we set $b=1000$ and $p=5,5,20,10,2,2$. In k-FSC-L, we set $s=5000,5000,1500,2500,3500,5000$, namely $s=500k$ (except Epileptic because it is a relatively small dataset).
The numbers of repetitions of k-means in k-PC, LSC-K, k-FSC, and k-FSC-MB are 10 on all datasets. The number of repetitions of k-means in k-FSC-L is 100 on all datasets. Note that according to Theorem \ref{the_d}, $d$ can be arbitrarily large. But in practice we just use a relatively small $d$ (according to the data dimension $m$) to reduce the computational cost. Since the initialization of $\bm{D}$ may not be good enough, we still need to tune $\lambda$ under the guidance of Theorem \ref{the_lambda}.

Figure \ref{fig_real_C} shows two examples of $\bm{C}$ given by k-FSC on MNIST and Fashion-MNIST. We see that k-FSC can find the cluster blocks effectively.
The average clustering accuracy (ACC), normalized mutual information (NMI), and time cost\footnote{The time cost is the total cost of all procedures. The underlined values are the results reported in the original papers. The `\slash'  means out-of-memory or exceeding 3 hours. On Fashion-MNIST and Postures, S$^3$COMP-C is out of memory. So we perform S$^3$COMP-C on two subsets ($20\%$) of Fashion-MNIST and Postures. The time costs of S$^3$COMP-C can be reduced if performed in parallel.} of ten repeated trials on MNIST and Fashion-MNIST are reported in Table \ref{tab_MNIST}, in which we also compare NCSC \cite{zhang2019neural} (a deep learning method). We see that k-FSC, k-SFC-MB, and k-FSC-L outperformed other methods in terms of ACC and NMI.  Meanwhile,  k-FSC-MB and k-FSC-L are more efficient than most methods such as S$^5$C  and S$^3$COMP-C.

\begin{table}[h!]
\centering
\caption{Performance on Epileptic and Postures}\label{tab_Epileptic}
\begin{tabular}{c|l|c|c|c}
\hline 
&	& ACC ($\%$) & NMI ($\%$) & Time (s) \\ \hline \hline
\multirow{9}{*}{\rotatebox{90}{Epileptic}}&k-means & 23.88$\pm$0.09 & 0.84$\pm$0.02 & 1.3 \\ 
&k-PC & 42.03$\pm$1.76& 18.12$\pm$0.91 & 2.1\\ 
&Nystr\"om & 27.21$\pm$2.84 & 4.91$\pm$1.55 & 11.2\\ 
&LSC-K & 33.75$\pm$0.27 & 14.96$\pm$0.23 & 9.8\\ 
&SSSC &38.14$\pm$3.27 & 19.41$\pm$2.76 & 27.1 \\ 
&RPCM-F$^2$ & 38.01$\pm$2.43 &16.42$\pm$1.07 & 2.3 \\ 
&S$^5$C & 41.42$\pm$2.15 & 22.08$\pm$1.79 & 24.3\\  
&{\small S$^3$COMP-C}& 41.39$\pm$3.68  &\textbf{26.04}$\pm$2.38 & $\widetilde{436.5}$ \\ \cline{2-5}
&k-FSC &43.26$\pm$2.16  &23.82$\pm$1.12 & 21.7 \\ 
&k-FSC-MB & \textbf{43.49}$\pm$1.75 & 24.01$\pm$0.98 & 9.1 \\
&k-FSC-L & \textbf{45.40}$\pm$0.98 & \textbf{24.29}$\pm$1.33  & 5.9 \\ 
\hline \hline
\multirow{9}{*}{\rotatebox{90}{Postures}}&k-means & 42.68$\pm$2.12 & 33.61$\pm$0.87 & 7.2\\ 
&k-PC & 41.41$\pm$3.41& 21.33$\pm$3.65 & 9.2\\ 
&Nystr\"om & 43.27$\pm$2.78 & 32.35$\pm$0.82 & 21.6\\ 
&LSC-K & 46.40$\pm$2.44 & 37.24$\pm$1.68 & 207.7\\ 
&SSSC &45.39$\pm$3.24 & 36.71$\pm$1.02 & 20.1\\ 
&RPCM-F$^2$ & 47.02$\pm$2.71 &36.41$\pm$2.15 & 23.0 \\ 
&S$^5$C &46.67$\pm$0.41 & \textbf{38.66}$\pm$1.48 & 451.8\\   
&{\small S$^3$COMP-C}& 45.26$\pm$3.38  &36.24$\pm$1.49 & $\widetilde{755.3}$ \\ \cline{2-5}
&k-FSC & \textbf{51.65}$\pm$2.26 & \textbf{39.39}$\pm$0.74& 173.9 \\ 
&k-FSC-MB & 49.97$\pm$2.29 & 36.15$\pm$1.73& 24.6\\
&k-FSC-L & \textbf{51.10}$\pm$4.73& 38.18$\pm$2.17& 9.8 \\ 
\hline
\end{tabular}
\end{table}

\begin{table}[h!]
\centering
\caption{Performance on Covtype and PokerHand}\label{tab_Covtype}
\begin{tabular}{c|l|c|c|c}
\hline 
&	& ACC ($\%$) & NMI ($\%$) & Time (s) \\ \hline \hline
\multirow{9}{*}{\rotatebox{90}{Covtype}}&k-means & 20.84$\pm$0.00 & 3.69$\pm$0.00 & 156.6 \\ 
&k-PC & 37.45$\pm$4.16& 5.09$\pm$0.51 & 123.7\\ 
&Nystr\"om & 23.18$\pm$0.90 & 3.75$\pm$0.01 & 635.8\\ 
&LSC-K & 24.16$\pm$1.29 & 5.73$\pm$0.08 & 4792.5\\ 
&SSSC &30.02$\pm$1.46 & 6.48$\pm$0.31 & 332.6\\ 
&RPCM-F$^2$ & 23.66$\pm$0.53 &3.75$\pm$0.11 & 2362.2 \\ 
&S$^5$C & \slash & \slash & \slash\\    
&{\small S$^3$COMP-C}& \slash  & \slash & \slash \\ \cline{2-5}
&k-FSC & \textbf{43.95}$\pm$3.46 & 5.59$\pm$1.64& 1762.6 \\ 
&k-FSC-MB & 41.31$\pm$3.27 & \textbf{7.70}$\pm$3.76 & 60.4\\
&k-FSC-L & \textbf{43.72}$\pm$2.95 & \textbf{6.92}$\pm$2.77& 19.6 \\ 
\hline \hline
\multirow{8}{*}{\rotatebox{90}{PokerHand}}&k-means & 10.47$\pm$0.05 & 0.04$\pm$0.00 & 169.3 \\ 
&k-PC & 12.43$\pm$0.42& 0.17$\pm$0.05 & 306.5\\ 
&Nystr\"om & 10.91$\pm$0.15 & 0.08$\pm$0.03 & 995.6\\ 
&LSC-K & \underline{12.32} & \underline{0.00} & \underline{8829.0}\\ 
&SSSC &\underline{19.31} & \underline{0.20} & \underline{474.1}\\ 
&RPCM-F$^2$ & \slash &\slash & \slash \\ 
&S$^5$C & \slash & \slash & \slash \\  
&{\small S$^3$COMP-C}& \slash  & \slash & \slash \\ \cline{2-5}
&k-FSC & {21.82}$\pm$2.18 & \textbf{0.33}$\pm$0.13& 1017.8\\ 
&k-FSC-MB & \textbf{33.15}$\pm$7.09 & 0.21$\pm$0.14 & 33.2 \\
&k-FSC-L & \textbf{22.19}$\pm$3.13 & \textbf{0.39}$\pm$0.15 & 18.6 \\ 
\hline
\end{tabular}
\end{table}

The results on Epileptic and postures are shown in Table \ref{tab_Epileptic}. In terms of ACC, the proposed methods outperformed all other methods. In terms of NMI, the proposed methods outperformed all other methods except S$^5$C and S$^3$COMP-C that are time-consuming..

The results on Covtype and PokerHand are reported in Table  \ref{tab_Covtype}. These two datasets are more challenging than the previous four datasets because the clusters are highly imbalanced, which will lead to low NMI.  On PokerHand, the results of LSC-K and SSSC are from \cite{peng2013scalable}. Since the datasets are too large, S$^5$C and S$^3$COMP-C do not apply. The ACCs of the proposed methods are much higher than other methods. Moreover, the time costs of k-FSC-MB and k-FSC-L are much lower than other methods.

\section{Conclusion}\label{sec_con}
This paper has presented a linear-complexity method k-FSC for subspace clustering. K-FSC is able to handle arbitrarily large dataset, streaming data, sparse noise, outliers, and missing data. Extensive experiments showed that k-FSC and its extensions are more accurate and efficient than state-of-the-art methods of subspace clustering. This improvement stems from the following aspects. First, k-FSC, k-FSC-MB, and k-FSC-L can utilize much more data points in the learning step while most of the other methods require the subset be small enough to ensure the scalability. Second, in the proposed methods, the number of clusters, as an important information, is directly exploited. Other methods except k-PC do not use the information before the spectral clustering step. K-FSC-MB and k-FSC-L are very efficient in handling very large datasets and are as accurate as k-FSC is. 
Future study may focus on the sufficient conditions for k-FSC to succeed.

% Acknowledgements should only appear in the accepted version.
\section*{Acknowledgements}
The work was supported by the research funding T00120210002 of Shenzhen Research Institute of Big Data. The author appreciates the reviewers' valuable time and comments.

%\textbf{Do not} include acknowledgements in the initial version of
%the paper submitted for blind review.

% In the unusual situation where you want a paper to appear in the
% references without citing it in the main text, use \nocite

\bibliography{Ref_SC}
 \bibliographystyle{ACM-Reference-Format}

\appendix
\section{More results on synthetic data}\label{app_DL}
In Figure \ref{Fig_syn_DL}(a), k-PC does not work when $d/d_0>5$ because the subspace dimension is equal or larger than the data dimension($m=25, d_0=5$). K-FSC always has high clustering accuracy even when $d=50d_0$. Figure \ref{Fig_syn_DL}(b) shows the clustering accuracy of k-FSC ($d=2d_0$) with different hyper-parameter $\lambda$ in the cases of different noise level. We see that k-FSC works well with a large range of $\lambda$ especially when the noise level is low. 
%Note that on both synthetic data and real data, the $\lambda$ given by  \eqref{eq.determine_lambda} is often around 0.5 and works well in practice provided that the initialization $\bm{D}_0$ is not too far from the ground truth. When the noise is small and the angle between pair-wise subspaces is large, the range of effective $\lambda$ is large.
%Note that the k-means initialization in k-FSC is not applicable to the case of $d>250$ where the number of data is 250. 
\begin{figure}[h!]
\centering
\includegraphics[width=6.8cm,trim={40 0 40 7},clip]{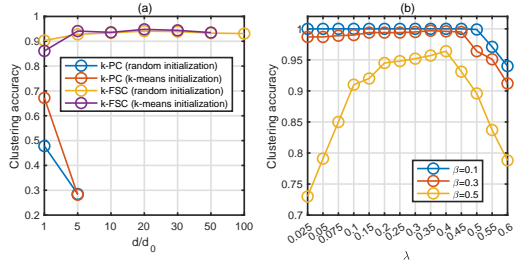}
\caption{(a) sensitivity of k-PC/k-FSC to $d$ ($\beta=0.5$); (b) sensitivity of k-FSC to  $\lambda$ in the cases of different noise level $\beta$.}\label{Fig_syn_DL}
\end{figure}

%\section{Parameters of compared methods}\label{app_par}
%Because SSSC, RPCM-F$^2$, S$^5$C, and S$^3$COMP-C are not scalable to Covtype and PokerHand, the number of parameter values of each of them is less than 6. 
%In k-PC, we set $d=6,10,6,2,5,3$. In Nystr\"om-orth, we set $\sigma=$0.25,
%0.5,0.3,0.5,0.2,1 and $s=3000$,3000,2000,1500,1500,1000. In LSC-K, we set $r=5,4,5,5,5,5,$ and $s=3000$, 3000,2000,1500,1000,1000. In SSSC, we set $\lambda=10^{-4}$,$10^{-2}$, $10^{-1}$,$10^{-1}$, $10^{-1}$ and $s=$3000, 3000,
%1500,1500,3500. In RPCM-F$^2$, we set $\beta=0.1,0.1,10,0.5,0.1$ and $s=3000$,3000,1500,1500,1000. In S$^5$C, we set $\lambda=0.2,0.2,0.3,0.2$ and $s=3000,3000,1500,1500$. In S$^3$COMP-C, we set $T=10$, $\beta=0.8$, and $\lambda=1,1,0.4,0.5$ for the first four datasets. 

%The web links of the six datasets used in this paper are as follows:
%\textbf{MNIST} \url{http://yann.lecun.com/exdb/mnist/}
%\textbf{Fashion-MNIST} \url{https://github.com/zalandoresearch/fashion-mnist}
%\textbf{Epileptic} \url{https://archive.ics.uci.edu/ml/datasets/Epileptic+Seizure+Recognition}
%\textbf{Postures} \url{https://archive.ics.uci.edu/ml/datasets/Motion+Capture+Hand+Postures}
%\textbf{Covtype} \url{https://archive.ics.uci.edu/ml/datasets/covertype}
%\textbf{PokerHand} \url{https://archive.ics.uci.edu/ml/datasets/Poker+Hand}.

\section{Proof for Proposition \ref{proposition_1}}
\begin{proof}
(a)  To prove, we need to show that: \textcircled{1}  $\sum_{j=1}^k\Vert{\bm{C}}^{(j)}\Vert_{2,0}$ has a minimum under the constraint; \textcircled{2} when the minimum of $\sum_{j=1}^k\Vert{\bm{C}}^{(j)}\Vert_{2,0}$ is attained, all columns of $\bm{X}$ are correctly clustered.

For \textcircled{1}. Obviously, when all columns of $\bm{X}$ are correctly clustered according to \eqref{eq.cluster_assign_0}, we have $\sum_{j=1}^k\Vert\bm{C}^{(j)}\Vert_{2,0}=n$. If $\sum_{j=1}^k\Vert\bm{C}^{(j)}\Vert_{2,0}<n$, $\bm{C}$ has at least one zero column, which means the corresponding column of $\bm{X}$ can not be reconstructed and $\bm{X}\neq\bm{D}\bm{C}$. Therefore, under the constraint $\bm{X}=\bm{D}\bm{C}$, we have $\sum_{j=1}^k\Vert{\bm{C}}^{(j)}\Vert_{2,0}\geq n$ and the minimum is attainable.

For \textcircled{2}, we only need to show that when one column of $\bm{X}$ is not correctly clustered, $\sum_{j=1}^k\Vert\bm{C}^{(j)}\Vert_{2,0}\geq n+1$. Without loss of generality, we assume that $\bm{x}_1\in\mathcal{S}_j$, $\bm{x}_2\in\mathcal{S}_l$, and $j\neq l$. Suppose that $\bm{x}_1$ and $\bm{x}_2$ are assigned into $\mathcal{S}_p$ corresponding to $\bm{D}^{(p)}$, where $p\in[k]$. Since the subspaces are independent and $\min_{j\in[d],i\in[n]}\vert z_{ji}\vert>0$, to ensure there exist some $\bm{c}_1$ and $\bm{c}_2$ such that $\bm{x}_1=\bm{D}^{(p)}\bm{c}_1$ and $\bm{x}_2=\bm{D}^{(p)}\bm{c}_2$, the column space of $\bm{D}^{(p)}$ must contain $\bm{U}^{(j)}$ and $\bm{U}^{(l)}$. It indicates $\hat{d}\geq 2d$, which is contradiction to the assumption $\hat{d}<2d$. Hence, at least one column of $[\bm{U}^{(j)}, \bm{U}^{(l)}]$ is contained in the column space of some $\bm{D}^{(q)}$, where $q\neq l\neq j$. As a result, 
\begin{equation*}
\bm{x}_1=[\bm{D}^{(p)},\bm{D}^{(q)}]
\left[
\begin{matrix}
\bm{c}_1^{(p)}\\
\bm{c}_1^{(q)}
\end{matrix}
\right]
\quad \text{or} \quad
\bm{x}_2=[\bm{D}^{(p)},\bm{D}^{(q)}]
\left[
\begin{matrix}
\bm{c}_2^{(p)}\\
\bm{c}_2^{(q)}
\end{matrix}
\right]
\end{equation*}
where $\bm{c}_1^{(p)},\bm{c}_1^{(q)}\neq\bm{0}$ or $\bm{c}_2^{(p)},\bm{c}_2^{(q)}\neq\bm{0}$. Therefore, $\sum_{j=1}^k\Vert\bm{C}^{(j)}\Vert_{2,0}\geq n+1$, if the data are not correctly clustered. In other words, if $\sum_{j=1}^k\Vert\bm{C}^{(j)}\Vert_{2,0}<n+1$, all columns of $\bm{X}$ are clustered correctly. Together with \textcircled{1}, we finish the proof.

(b) The proof is similar to that for (a) and is omitted for simplicity.
\end{proof}

\section{Proof for Theorem \ref{the_d}}\label{app_th1}
Before proving the theorem, we give the following lemmas. 
\begin{lemma}\label{lem_opt1sparse}
Let $\lbrace \hat{\bm{C}},\hat{\bm{D}}\rbrace$ be the optimal solution of \eqref{eq.ksc_2_0}. Then for all $i\in[n]$, $\sum_{j=1}^k\mathbb{1}(\Vert\hat{\bm{C}}_{:i}^{(j)}\Vert\neq\bm{0})=1$.
\end{lemma}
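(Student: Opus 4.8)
The plan is to exploit the fact that both the objective and the constraint of \eqref{eq.ksc_2_0} decouple across the columns of $\bm{C}$. Writing $\sum_{j=1}^k\Vert\bm{C}^{(j)}\Vert_{2,1}=\sum_{i=1}^n\sum_{j=1}^k\Vert\bm{C}_{:i}^{(j)}\Vert$ and noting that the constraint $\hat{\bm{D}}\bm{C}=\bm{X}$ is the collection of independent constraints $\hat{\bm{D}}\bm{C}_{:i}=\bm{x}_i$, the column $\hat{\bm{C}}_{:i}$ must itself minimize the per-column program $\min_{\bm{c}}\sum_{j=1}^k\Vert\bm{c}^{(j)}\Vert$ subject to $\hat{\bm{D}}\bm{c}=\bm{x}_i$. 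Since the data are nonzero, at least one block of $\hat{\bm{C}}_{:i}$ must be nonzero, which already gives $\sum_{j}\mathbb{1}(\Vert\hat{\bm{C}}_{:i}^{(j)}\Vert\neq\bm{0})\geq 1$; the whole content of the lemma is the reverse bound.

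For the upper bound I would show that the per-column minimizer is supported on a single block by exhibiting a \emph{strict} dual certificate built from the incoherence that defines $\mathbb{S}_D^+$. Concretely, suppose $\bm{x}_i\in\textup{col}(\hat{\bm{D}}^{(p)})$, set $\bm{c}_0=\hat{\bm{D}}^{(p)\dagger}\bm{x}_i$ (supported only on block $p$), and put $\bm{u}_p=\bm{c}_0/\Vert\bm{c}_0\Vert$. I would then take the multiplier
$$\bm{\nu}=\big(\hat{\bm{D}}^{(p)\dagger}\big)^\top\bm{u}_p,$$
for which $\hat{\bm{D}}^{(p)\top}\bm{\nu}=\bm{u}_p$ (using $\hat{\bm{D}}^{(p)\dagger}\hat{\bm{D}}^{(p)}=\bm{I}$ on full-column-rank blocks), while for $j\neq p$ one gets $\hat{\bm{D}}^{(j)\top}\bm{\nu}=(\hat{\bm{D}}^{(p)\dagger}\hat{\bm{D}}^{(j)})^\top\bm{u}_p$, whence $\Vert\hat{\bm{D}}^{(j)\top}\bm{\nu}\Vert\leq\Vert\hat{\bm{D}}^{(p)\dagger}\hat{\bm{D}}^{(j)}\Vert_2<1$ by the very definition of $\mathbb{S}_D^+$. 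Because $\langle\bm{\nu},\bm{x}_i\rangle$ is then a lower bound on the objective (from $\langle\hat{\bm{D}}^{(j)\top}\bm{\nu},\bm{c}^{(j)}\rangle\leq\Vert\bm{c}^{(j)}\Vert$) that is attained by $\bm{c}_0$, complementary slackness combined with the strict inequality $\Vert\hat{\bm{D}}^{(j)\top}\bm{\nu}\Vert<1$ forces every optimal $\hat{\bm{C}}_{:i}$ to vanish on all blocks $j\neq p$. Together with the nonzero lower bound this yields exactly one nonzero block.

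The step I expect to be the real obstacle is justifying the hypothesis $\bm{x}_i\in\textup{col}(\hat{\bm{D}}^{(p)})$ for a \emph{single} $p$, i.e. that at the global optimum the learned dictionary blocks are aligned with the generating subspaces $\mathcal{S}_1,\ldots,\mathcal{S}_k$. This does not follow from the incoherence constraint alone (an incoherent but misaligned dictionary can genuinely require two blocks to reconstruct one column), so it must be extracted from optimality in $\bm{D}$ together with Assumption \ref{asump_X} and $\hat{d}\geq d$. The plan here is to compare the optimal value against the ``aligned'' feasible solution in which each $\hat{\bm{D}}^{(j)}$ contains an orthonormal basis of $\mathcal{S}_j$, whose objective equals $\sum_i\Vert\bm{x}_i\Vert$ and which uses one block per column, and then to argue that any solution whose blocks fail to contain a whole $\mathcal{S}_j$ pays strictly more. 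Making this comparison airtight is delicate because the columns of $\bm{D}$ are only norm-bounded rather than orthonormal, so within-block correlations can deflate coefficient norms; controlling this scaling effect — rather than the dual-certificate computation itself — is where the argument needs the most care.
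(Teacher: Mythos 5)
Your core argument is correct, and it is the convex-dual twin of the paper's own proof. The paper proves the key step primally: given a feasible multi-block representation $\bm{x}=\sum_{j}\hat{\bm{D}}^{(j)}\bm{c}^{(j)}$ and a single-block one $\bm{x}=\hat{\bm{D}}^{(l)}\bm{\alpha}$, it takes the minimum-norm coefficient $\hat{\bm{\alpha}}=\hat{\bm{D}}^{(l)\dagger}\sum_{j}\hat{\bm{D}}^{(j)}\bm{c}^{(j)}$ and bounds
$\Vert\hat{\bm{\alpha}}\Vert\leq\Vert\bm{c}^{(l)}\Vert+\sum_{j\neq l}\Vert\hat{\bm{D}}^{(l)\dagger}\hat{\bm{D}}^{(j)}\Vert_2\Vert\bm{c}^{(j)}\Vert\leq\sum_{j}\Vert\bm{c}^{(j)}\Vert$,
with strict inequality whenever some $\bm{c}^{(j)}\neq\bm{0}$, $j\neq l$ --- exactly the incoherence quantity $\Vert\hat{\bm{D}}^{(p)\dagger}\hat{\bm{D}}^{(j)}\Vert_2<1$ you use to get $\Vert\hat{\bm{D}}^{(j)\top}\bm{\nu}\Vert<1$. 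So both proofs turn on the same inequality; yours packages it as a dual certificate with complementary slackness, the paper's as a primal exchange exhibiting a strictly cheaper single-block competitor. Your dual route directly characterizes \emph{all} optimal per-column solutions at once and would extend more gracefully to penalized or noisy variants; the paper's primal route is more elementary and avoids invoking strong duality for the equality-constrained $\ell_{2,1}$ program.

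Regarding the obstacle you flag --- justifying $\bm{x}_i\in\textup{col}(\hat{\bm{D}}^{(p)})$ for a single $p$ --- you are right that it does not follow from incoherence alone, but be aware that the paper does not close it inside this lemma either: its proof literally begins by \emph{supposing} both representations exist (``$\bm{x}=\bm{D}^{(l)}\bm{\alpha}$''), so single-block representability is a hypothesis there, not a conclusion. The alignment of learned blocks with the generating subspaces is handled separately in Lemma \ref{lem_aaa}, via optimality over $\bm{D}$: if a block $\hat{\bm{D}}^{(l)}$ has columns outside $\textup{span}(\bm{U}^{(\ell)})$, replacing them with duplicates of an in-span column and spreading the coefficient $\alpha_{d_u}$ over the $\hat{d}-d_u+1$ copies strictly reduces the coefficient norm, so a misaligned dictionary cannot attain the minimum; the two lemmas are then combined (informally) in the proof of Theorem \ref{the_d}. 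That duplication trick is precisely how the paper deals with the scaling concern you raise about norm-bounded, non-orthonormal columns deflating coefficient norms. In short, your plan of comparing against an aligned dictionary reproduces the paper's Lemma \ref{lem_aaa}; the step you identify as delicate is genuinely delicate, and neither your sketch nor the paper's own argument makes it fully rigorous within Lemma \ref{lem_opt1sparse} itself.
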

\begin{proof}
Suppose $\bm{x}=\sum_{j=1}^k\bm{D}^{(j)}\bm{c}^{(j)}$ and $\bm{x}=\bm{D}^{(l)}\bm{\alpha}$ where $l\in[k]$. It follows that
\begin{equation}\label{eq_proof_ca}
\bm{D}^{(l)}\bm{\alpha}=\sum_{j=1}^k\bm{D}^{(j)}\bm{c}^{(j)}.
\end{equation}
The minimum-norm solution of $\bm{\alpha}$ in \eqref{eq_proof_ca} is
\begin{equation}
\hat{\bm{\alpha}}={{\bm{D}}^{(l)}}^{\dagger} \sum_{j=1}^k\bm{D}^{(j)}\bm{c}^{(j)},
\end{equation}
where ${{\bm{D}}^{(l)}}^\dagger=\bm{V}\left[\begin{matrix}\bm{\Sigma}^{-1}& \bm{0}\\\bm{0}& \bm{0}\end{matrix}\right]\bm{U}^\top$ denotes the Moore–Penrose inverse of $\hat{\bm{D}}^{(l)}$ and $\bm{V},\bm{\Sigma},\bm{U}$ are from the SVD $\bm{D}^{(l)}=\bm{U}\left[\begin{matrix}\bm{\Sigma}& \bm{0}\\\bm{0}& \bm{0}\end{matrix}\right]\bm{V}^\top$. 
We have
\begin{equation}\label{eq.akc}
\begin{aligned}
\Vert \hat{\bm{\alpha}}\Vert=&\Vert\sum_{j=1}^k{{\bm{D}}^{(l)}}^{\dagger}\bm{D}^{(j)}\bm{c}^{(j)}\Vert \leq \Vert\bm{c}^{(l)}\Vert+\sum_{j\neq l}\Vert{{\bm{D}}^{(l)}}^{\dagger}\bm{D}^{(j)}\bm{c}^{(j)}\Vert \\
\leq&\Vert\bm{c}^{(l)}\Vert+\sum_{j\neq l}\Vert{{\bm{D}}^{(l)}}^{\dagger}\bm{D}^{(j)}\Vert_2\Vert\bm{c}^{(j)}\Vert\leq \sum_{j=1}^k\Vert\bm{c}^{(j)}\Vert,
\end{aligned}
\end{equation}
where the first inequality used the fact ${{\bm{D}}^{(l)}}^{\dagger}\bm{D}^{(l)}=\bm{I}$ and the third inequality used the condition in $\mathbb{S}_{D}^+$. In \eqref{eq.akc}, if $\bm{c}^{(j)}=\bm{0}$ for all $j\neq l$, the equality holds and then $\hat{\bm{c}}=\hat{\bm{\alpha}}$. Because $\bm{x}=\bm{D}\hat{\bm{\alpha}}$, $\hat{\bm{\alpha}}\neq\bm{0}$. Now expanding the result to all columns of $\bm{X}$, we finish the proof.
\end{proof}

\begin{lemma}\label{lem_aaa}
Suppose $\bm{x}\in\mathcal{S}_\ell$ and  $\bm{x}=\bm{D}^{(l)}\bm{\alpha}$. Denote $\bm{U}^{(\ell)}$ the basis of $\mathcal{S}_\ell$. The minimum of $\Vert\bm{\alpha}\Vert$ is not attained if some columns of $\bm{D}^{(l)}$ are not in $\text{span}(\bm{U}^{(\ell)})$. 
\end{lemma}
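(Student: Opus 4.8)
The plan is to show that any block $\bm{D}^{(l)}$ possessing a column outside $\text{span}(\bm{U}^{(\ell)})$ can be strictly improved, so such a block cannot attain the minimum of $\Vert\bm{\alpha}\Vert$ over the feasible pairs $(\bm{D}^{(l)},\bm{\alpha})$ with $\bm{D}^{(l)}\in\mathbb{S}_D$ and $\bm{x}=\bm{D}^{(l)}\bm{\alpha}$. The device is a projection-plus-renormalization of the dictionary columns onto $\mathcal{S}_\ell$, which keeps the representation feasible while shrinking the coefficients. Let $\bm{P}_\ell=\bm{U}^{(\ell)}{\bm{U}^{(\ell)}}^\top$ be the orthogonal projector onto $\mathcal{S}_\ell$ and write $\bm{d}_i=\bm{D}^{(l)}_{:i}$, so that $\bm{x}=\sum_i\alpha_i\bm{d}_i$.

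First I would use the hypothesis $\bm{x}\in\mathcal{S}_\ell$ to kill the out-of-subspace part of the representation. Applying $\bm{P}_\ell$ and using $\bm{P}_\ell\bm{x}=\bm{x}$ gives $\bm{x}=\sum_i\alpha_i\bm{P}_\ell\bm{d}_i$. For every column with $\bm{P}_\ell\bm{d}_i\neq\bm{0}$ I define the renormalized in-subspace atom $\tilde{\bm{d}}_i=\bm{P}_\ell\bm{d}_i/\Vert\bm{P}_\ell\bm{d}_i\Vert$; these have unit norm and lie in $\mathcal{S}_\ell$, so the resulting block $\tilde{\bm{D}}^{(l)}$ still belongs to $\mathbb{S}_D$. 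Setting $\tilde{\alpha}_i=\Vert\bm{P}_\ell\bm{d}_i\Vert\,\alpha_i$ then yields $\bm{x}=\tilde{\bm{D}}^{(l)}\tilde{\bm{\alpha}}$, a feasible representation with the new block.

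The core estimate is the coefficientwise contraction. Because $\Vert\bm{d}_i\Vert\le1$ in $\mathbb{S}_D$ and orthogonal projection is nonexpansive, $\Vert\bm{P}_\ell\bm{d}_i\Vert\le1$, with the inequality strict exactly for the atoms that are not in $\text{span}(\bm{U}^{(\ell)})$. Hence $\vert\tilde{\alpha}_i\vert\le\vert\alpha_i\vert$ for all $i$, and $\vert\tilde{\alpha}_p\vert<\vert\alpha_p\vert$ whenever an offending atom $\bm{d}_p\notin\mathcal{S}_\ell$ carries a nonzero coefficient. Applying this map to the minimum-norm $\bm{\alpha}$ for $\bm{D}^{(l)}$ produces a feasible $\tilde{\bm{\alpha}}$ for $\tilde{\bm{D}}^{(l)}$ with $\Vert\tilde{\bm{\alpha}}\Vert<\Vert\bm{\alpha}\Vert$, and since the smallest norm achievable with $\tilde{\bm{D}}^{(l)}$ is no larger than $\Vert\tilde{\bm{\alpha}}\Vert$, the optimal value strictly decreases. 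This exhibits a strictly better feasible block, so the minimum is not attained at $\bm{D}^{(l)}$.

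The step I expect to be delicate is strictness, i.e. ruling out degenerate situations that would only give $\Vert\tilde{\bm{\alpha}}\Vert\le\Vert\bm{\alpha}\Vert$. Two cases need care: an offending atom that carries a zero coefficient, and an atom with $\bm{P}_\ell\bm{d}_i=\bm{0}$ (fully orthogonal to $\mathcal{S}_\ell$), for which $\tilde{\bm{d}}_i$ is undefined. In the first case such an atom is unused and may be freely rotated into $\mathcal{S}_\ell$ without changing the objective, so no minimizer genuinely requires it outside the subspace; in the second case a fully orthogonal atom can enter the representation of $\bm{x}\in\mathcal{S}_\ell$ only to cancel the out-of-subspace parts of other atoms, which is pure waste and disappears once all atoms are projected in. I would dispatch both by the same renormalization argument after replacing any fully-orthogonal or unused atom with an arbitrary unit vector of $\mathcal{S}_\ell$, confirming that an in-subspace block always does at least as well and strictly better as soon as a non-subspace atom is actually used. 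Combined with Lemma~\ref{lem_opt1sparse}, this forces every block of an optimal solution into its own subspace, which is the ingredient needed for Theorem~\ref{the_d}.
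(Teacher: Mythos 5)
Your core construction is sound and, on one point, actually cleaner than the paper's own argument. The paper proceeds by partitioning $\bm{D}^{(l)}$ into in-span columns $\bm{D}^{(l)}_\ell$ and out-of-span columns $\bm{D}^{(l)}_{-\ell}$, zeroing the coefficients on the latter, and then replacing $\bm{D}^{(l)}_{-\ell}$ by copies of one in-span column $\bm{\delta}$ and splitting that column's coefficient $\alpha_{d_u}$ equally over the $\hat{d}-d_u+1$ copies, which strictly reduces the squared norm from $\alpha_{d_u}^2$ to $\alpha_{d_u}^2/(\hat{d}-d_u+1)$. The zeroing step is problematic as written: an out-of-span column need not be orthogonal to $\mathcal{S}_\ell$, so $\bm{D}^{(l)}_{-\ell}$ can contribute a nonzero in-span component that is needed for $\bm{x}=\bm{D}^{(l)}\bm{\alpha}$, and simply dropping those coefficients breaks feasibility. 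Your projection-plus-renormalization ($\tilde{\bm{d}}_i=\bm{P}_\ell\bm{d}_i/\Vert\bm{P}_\ell\bm{d}_i\Vert$, $\tilde{\alpha}_i=\Vert\bm{P}_\ell\bm{d}_i\Vert\alpha_i$) is exactly the right repair: feasibility follows from $\bm{P}_\ell\bm{x}=\bm{x}$, and strict descent follows from $\Vert\bm{P}_\ell\bm{d}_i\Vert<\Vert\bm{d}_i\Vert\leq 1$ for any out-of-span atom that is actually used. So for the main case your route is genuinely different from, and tighter than, the paper's.

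The gap is in the edge case you yourself flagged: when every out-of-span atom carries a zero coefficient in the minimum-norm $\bm{\alpha}$, your rotation argument only shows that an all-in-span block achieves the \emph{same} value; it does not show that the given block fails to attain the minimum, which is what the lemma literally asserts and what the proof of Theorem \ref{the_d} uses when it concludes that the columns of each optimal $\bm{D}^{(j)}$ lie in the span of one subspace's bases. The paper's duplication device is what closes this case: since $\bm{x}\neq\bm{0}$, some in-span atom $\bm{\delta}$ carries a nonzero coefficient; replacing the unused out-of-span columns by copies of $\bm{\delta}$ and splitting that coefficient equally across the copies keeps the representation feasible and strictly decreases $\Vert\bm{\alpha}\Vert$. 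Hence any block containing an out-of-span column, used or not, is strictly improvable. Grafting that one spreading step onto your argument makes your proof complete, and the resulting proof is more rigorous than the paper's, since it avoids the unjustified zeroing step entirely.
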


\begin{proof}
 We partition $\bm{D}^{(l)}$ into two parts
$\bm{D}^{(l)}=[\bm{D}^{(l)}_\ell\ \ \bm{D}^{(l)}_{-\ell}]$,
where ${\bm{D}}_\ell^{(l)}\in\mathbb{R}^{m\times d_u}$ , ${\bm{D}}_{-\ell}^{(l)}\in\mathbb{R}^{m\times (\hat{d}-d_u)}$, and $1\leq d_u\leq \hat{d}-1$. The columns of ${\bm{D}}_{-\ell}^{(l)}$ are not in $\text{span}(\bold{U}^{(\ell)})$. A smaller $\Vert\bm{\alpha}\Vert$ is obtained when ${\bm{\alpha}}\leftarrow[\alpha_{1},\ldots,\alpha_{d_u},0,\ldots,0]^\top$.

Let $\bar{\bm{D}}^{(l)}=[\bm{D}^{(l)}_\ell\ \ \tilde{\bm{D}}_{\ell}^{(l)}]$,
where $\tilde{\bm{D}}_{\ell}^{(j)}\in\mathbb{R}^{m\times (\hat{d}-d_u)}$ and the columns of $\tilde{\bm{D}}_{\ell}^{(j)}$ are in $\text{span}(\bm{U}^{(\ell)})$. There is always a $\tilde{\bm{D}}_{\ell}^{(l)}$ such that
\begin{equation}
\Vert\bm{\alpha}\Vert>\Vert\bar{\bm{\alpha}}\Vert,
\end{equation}
where $\bar{\bm{D}}^{(l)}\bar{\bm{\alpha}}={\bm{D}}^{(l)}\bm{\alpha}$. An example is 
$\tilde{\bm{D}}_{\ell}^{(l)}=[\bm{\delta},\ldots,\bm{\delta}]$,
where $\bm{\delta}$ is the last column of $\bm{D}^{(l)}_\ell$. Accordingly,
$$
\bar{\bm{\alpha}}=[\alpha_{1},\ldots,\alpha_{d_u-1},\tfrac{\alpha_{d_u}}{\hat{d}-d_u+1},\ldots,\tfrac{\alpha_{d_u}}{\hat{d}-d_u+1}]
.$$
Obviously, $\Vert\bm{\alpha}\Vert>\sqrt{\sum_{i=1}^{d_u-1}{\alpha_{i}}^2+\frac{{\alpha_{d_u}}^2}{\hat{d}-d_u+1}}=\Vert\bar{\bm{\alpha}}\Vert$.
\end{proof}

Now combining Lemma \ref{lem_opt1sparse} and Lemma \ref{lem_aaa}, we conclude that in the optimal solution of \eqref{eq.ksc_2_0},  the columns of each $\bm{D}^{(j)}$ are in the span of one subspace's bases, each column of $\bm{X}$ is reconstructed by only one sub-matrix of $\bm{D}$, and cannot be reconstructed by an incorrect sub-matrix (since $\min_{j\in[d],i\in[n]}\vert z_{ji}\vert>0$). This finished the proof.

\section{Proof for Theorem \ref{the_lambda}}
\begin{proof}
We have the following result.
\begin{lemma}[\cite{haltmeier2013block}]\label{lem_subgrad}
The subgradient of $\ell_{2,1}$ norm is
\begin{equation}
\partial \Vert \bm{x}\Vert_{2,1}=\left\{
\begin{array}{ll}
\bm{x}/\Vert \bm{x}\Vert, &\textup{if}\ \Vert\bm{x}\Vert>0;\\
\bm{z}: \Vert \bm{z}\Vert<1, &\textup{if} \ \Vert\bm{x}\Vert=0.
\end{array}
\right.
\end{equation} 
\end{lemma}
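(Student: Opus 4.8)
The plan is to recognize that, for a single column (group) $\bm{x}$, the block norm $\Vert\bm{x}\Vert_{2,1}$ reduces to the ordinary Euclidean norm $\Vert\bm{x}\Vert$, so the claim is exactly the standard characterization of the subdifferential of the Euclidean norm. I would prove it directly from the definition of subgradient: a vector $\bm{g}$ lies in $\partial\Vert\bm{x}\Vert$ if and only if $\Vert\bm{y}\Vert\geq\Vert\bm{x}\Vert+\langle\bm{g},\bm{y}-\bm{x}\rangle$ for every $\bm{y}$. Equivalently, I would invoke the general formula for the subdifferential of any norm $\Vert\cdot\Vert$, namely $\partial\Vert\bm{x}\Vert=\{\bm{g}:\Vert\bm{g}\Vert_\ast\leq 1,\ \langle\bm{g},\bm{x}\rangle=\Vert\bm{x}\Vert\}$, together with the fact that the Euclidean norm is self-dual, so $\Vert\cdot\Vert_\ast=\Vert\cdot\Vert$. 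Both routes split naturally into the two cases of the statement.

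For the case $\Vert\bm{x}\Vert>0$ I would first note that $\Vert\bm{x}\Vert=\sqrt{\sum_i x_i^2}$ is differentiable at every nonzero point with gradient $\bm{x}/\Vert\bm{x}\Vert$, and that a convex function differentiable at a point has a singleton subdifferential equal to that gradient; this gives the first branch immediately. As an alternative that avoids differentiability, from the dual-norm formula any admissible $\bm{g}$ must satisfy $\langle\bm{g},\bm{x}\rangle=\Vert\bm{x}\Vert$ together with $\Vert\bm{g}\Vert\leq1$; by Cauchy--Schwarz $\langle\bm{g},\bm{x}\rangle\leq\Vert\bm{g}\Vert\,\Vert\bm{x}\Vert\leq\Vert\bm{x}\Vert$, and forcing equality throughout pins down $\bm{g}=\bm{x}/\Vert\bm{x}\Vert$ as the unique maximizer.

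For the case $\Vert\bm{x}\Vert=0$ (that is $\bm{x}=\bm{0}$) the subgradient inequality collapses to $\Vert\bm{y}\Vert\geq\langle\bm{g},\bm{y}\rangle$ for all $\bm{y}$. Substituting $\bm{y}=\bm{g}$ yields $\Vert\bm{g}\Vert\geq\Vert\bm{g}\Vert^2$, hence $\Vert\bm{g}\Vert\leq1$; conversely any $\bm{g}$ with $\Vert\bm{g}\Vert\leq1$ satisfies $\langle\bm{g},\bm{y}\rangle\leq\Vert\bm{g}\Vert\,\Vert\bm{y}\Vert\leq\Vert\bm{y}\Vert$ by Cauchy--Schwarz, so it is a subgradient. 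This identifies $\partial\Vert\bm{0}\Vert$ with the unit ball, completing the second branch. If needed I would lift this to the full matrix $\ell_{2,1}$ norm by observing that $\Vert\bm{Y}\Vert_{2,1}=\sum_j\Vert\bm{Y}_{:j}\Vert$ is separable across columns, so its subdifferential is the column-wise product of the single-column subdifferentials just computed.

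There is no real obstacle here; the only point needing care is the boundary convention. The self-contained derivation yields the \emph{closed} ball $\{\bm{g}:\Vert\bm{g}\Vert\leq1\}$, whereas the statement writes the strict inequality $\Vert\bm{z}\Vert<1$. I would either adopt the closed-ball form as the mathematically correct subdifferential, or remark that the strict-inequality version is harmless for the way the lemma is actually used downstream in Theorem \ref{the_lambda}, where only the existence of a feasible dual vector of norm strictly below $1$ is invoked in the stationarity conditions that decide whether a block of the optimal code vanishes.
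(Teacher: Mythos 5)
Your proof is correct, and in fact it supplies more than the paper does: the paper never proves this lemma at all, importing it by citation to \cite{haltmeier2013block} inside the proof of Theorem \ref{the_lambda}, so there is no in-paper argument to compare against. Your self-contained derivation is the standard convex-analysis one and is sound in every step: reducing the single-group $\ell_{2,1}$ norm to the Euclidean norm, disposing of the case $\Vert\bm{x}\Vert>0$ either by differentiability of the norm away from the origin or by the dual-norm characterization with Cauchy--Schwarz forcing equality, handling $\bm{x}=\bm{0}$ via the subgradient inequality with the test point $\bm{y}=\bm{g}$, and lifting to the full matrix norm by column-wise separability. You are also right on the one substantive point where you diverge from the statement: the subdifferential of $\Vert\cdot\Vert$ at the origin is the \emph{closed} unit ball $\lbrace\bm{z}:\Vert\bm{z}\Vert\leq 1\rbrace$, so the strict inequality as printed is a (minor) error rather than a convention you failed to reproduce. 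Your assessment of the downstream consequence is likewise accurate: in the proof of Theorem \ref{the_lambda}, optimality of $\bm{c}=\bm{0}$ correctly yields only the non-strict bound $\Vert{\bm{D}^{(j)}}^\top\bm{x}\Vert\leq\lambda$ under the closed-ball version, but the direction actually needed to justify the choice of $\lambda$ --- that any $\lambda$ strictly exceeding $\max_j\Vert{\bm{D}^{(j)}}^\top\bm{x}\Vert$ certifies $\bm{c}=\bm{0}$ via a subgradient of norm strictly below $1$ --- goes through unchanged, so the flaw in the lemma's statement is harmless to the paper's conclusions.
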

The optimality for the problem in the proposition indicates that 
$${\bm{D}^{(j)}}^\top(\bm{x}-\bm{D}\bm{c}^{(j)})=\lambda\partial \Vert \bm{c}^{(j)}\Vert_{2,1},\quad \forall j\in[k].$$
Letting $\bm{c}=0$ be the optimal solution, we have
$$ {\Vert\bm{D}^{(j)}}^\top\bm{x}\Vert< \lambda,\quad \forall j\in[k].$$
It means $\lambda>\max_j\Vert {\bm{D}^{(j)}}^\top\bm{x}\Vert$.
Expanding the result for all columns of $\bm{X}$, we finish the proof.
\end{proof}

\section{Proof for Theorem \ref{the_convergence}}
\begin{proof}
First, we give the following two lemmas.
\begin{lemma}[Lemma 10.4 in \cite{beck2017first}]\label{lem_D_inner}
Denote $\mathcal{L}(\bm{D}_{t_u})=\tfrac{1}{2}\Vert \bm{X}-\bm{D}_{t_u}\bm{C}_t\Vert_F^2$, where $\bm{D}_{t_u}\in\mathbb{S}_D$. Then in Algorithm \ref{alg.updateD},
\begin{equation}
\mathcal{L}(\bm{D}_{t_{u-1}})-\mathcal{L}(\bm{D}_{t_u})\geq \dfrac{\kappa_t}{2}\Vert \bm{D}_{t_{u-1}}-\bm{D}_{t_{u}}\Vert_F^2.
\end{equation}
\end{lemma}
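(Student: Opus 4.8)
The plan is to recognize Lemma~\ref{lem_D_inner} as the textbook sufficient-decrease estimate for a single projected-gradient step on the smooth objective $\mathcal{L}(\bm{D})=\tfrac12\Vert\bm{X}-\bm{D}\bm{C}_t\Vert_F^2$ over the closed convex set $\mathbb{S}_D$, performed with a step size equal to the reciprocal of the gradient's Lipschitz constant. The first thing I would do is compute $\nabla\mathcal{L}(\bm{D})=(\bm{D}\bm{C}_t-\bm{X})\bm{C}_t^\top=\bm{D}\bm{B}-\bm{A}$, with $\bm{A}=\bm{X}\bm{C}_t^\top$ and $\bm{B}=\bm{C}_t\bm{C}_t^\top$ as defined in Algorithm~\ref{alg.updateD}, and observe that $\Vert\nabla\mathcal{L}(\bm{D}_1)-\nabla\mathcal{L}(\bm{D}_2)\Vert_F=\Vert(\bm{D}_1-\bm{D}_2)\bm{B}\Vert_F\leq\Vert\bm{B}\Vert_2\Vert\bm{D}_1-\bm{D}_2\Vert_F$. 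Since $\kappa_t=\Vert\bm{B}\Vert_2$ is precisely the constant chosen in Algorithm~\ref{alg.updateD}, the Lipschitz constant of $\nabla\mathcal{L}$ coincides with $\kappa_t$. This yields the descent lemma $\mathcal{L}(\bm{D}')\leq\mathcal{L}(\bm{D})+\langle\nabla\mathcal{L}(\bm{D}),\bm{D}'-\bm{D}\rangle+\tfrac{\kappa_t}{2}\Vert\bm{D}'-\bm{D}\Vert_F^2$, valid for all $\bm{D},\bm{D}'$ (in particular on $\mathbb{S}_D$).

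Next I would rewrite the algorithmic update as a projection/proximal minimization. By definition of the Euclidean projection onto $\mathbb{S}_D$, the step $\bm{D}_{t_u}=\mathcal{P}_{\Pi}\big(\bm{D}_{t_{u-1}}-\kappa_t^{-1}\bm{G}\big)$ with $\bm{G}=\nabla\mathcal{L}(\bm{D}_{t_{u-1}})$ is exactly the unique minimizer over $\bm{D}\in\mathbb{S}_D$ of the quadratic model $q(\bm{D}):=\langle\nabla\mathcal{L}(\bm{D}_{t_{u-1}}),\bm{D}-\bm{D}_{t_{u-1}}\rangle+\tfrac{\kappa_t}{2}\Vert\bm{D}-\bm{D}_{t_{u-1}}\Vert_F^2$; indeed, expanding $\tfrac{\kappa_t}{2}\Vert\bm{D}-(\bm{D}_{t_{u-1}}-\kappa_t^{-1}\bm{G})\Vert_F^2$ and dropping the term independent of $\bm{D}$ reproduces $q$ up to an additive constant. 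Here I rely on $\mathbb{S}_D$ being a product of closed Euclidean unit balls (one per column), hence closed and convex, so that the projection is well defined and unique.

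The core of the argument exploits that $q$ is $\kappa_t$-strongly convex (its Hessian is $\kappa_t\bm{I}$). Because $\bm{D}_{t_u}$ minimizes $q$ over the convex set $\mathbb{S}_D$, the first-order optimality condition $\langle\nabla q(\bm{D}_{t_u}),\bm{D}-\bm{D}_{t_u}\rangle\geq 0$ for all feasible $\bm{D}$, combined with strong convexity, gives $q(\bm{D})\geq q(\bm{D}_{t_u})+\tfrac{\kappa_t}{2}\Vert\bm{D}-\bm{D}_{t_u}\Vert_F^2$ for every $\bm{D}\in\mathbb{S}_D$ (the ``second prox theorem''). Taking the feasible point $\bm{D}=\bm{D}_{t_{u-1}}$ and noting $q(\bm{D}_{t_{u-1}})=0$ yields $q(\bm{D}_{t_u})\leq-\tfrac{\kappa_t}{2}\Vert\bm{D}_{t_{u-1}}-\bm{D}_{t_u}\Vert_F^2$. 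Finally, applying the descent lemma with $\bm{D}'=\bm{D}_{t_u}$ and $\bm{D}=\bm{D}_{t_{u-1}}$, whose right-hand side equals exactly $\mathcal{L}(\bm{D}_{t_{u-1}})+q(\bm{D}_{t_u})$, I obtain $\mathcal{L}(\bm{D}_{t_u})\leq\mathcal{L}(\bm{D}_{t_{u-1}})-\tfrac{\kappa_t}{2}\Vert\bm{D}_{t_{u-1}}-\bm{D}_{t_u}\Vert_F^2$, which is the claim after rearrangement.

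Since every ingredient is a standard fact, I do not anticipate a genuine obstacle; the only delicate point is bookkeeping, namely ensuring that the step size $1/\kappa_t$ in Algorithm~\ref{alg.updateD} is the reciprocal of the \emph{actual} Lipschitz constant of $\nabla\mathcal{L}$ on $\mathbb{S}_D$, so that the quadratic coefficient $\kappa_t/2$ in the descent lemma matches the one in the prox model $q$. Were one instead to over-estimate the constant (for instance using $\gamma\Vert\bm{B}\Vert_2$ with $\gamma\geq 1$, as the $\bm{C}$-update does), the identical argument would go through with the larger constant and only strengthen the inequality.
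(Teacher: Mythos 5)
Your proof is correct. The paper does not actually prove this lemma---it is imported verbatim as Lemma 10.4 of \cite{beck2017first}---and your argument (the descent lemma with Lipschitz constant $\kappa_t=\Vert\bm{C}_t\bm{C}_t^\top\Vert_2$, the observation that the column-wise projection $\mathcal{P}_{\Pi}$ is the Frobenius-norm projection onto the product-of-balls set $\mathbb{S}_D$ so the update minimizes the $\kappa_t$-strongly convex quadratic model $q$, and then the second prox theorem with the feasible comparison point $\bm{D}_{t_{u-1}}$ where $q$ vanishes) is exactly the standard proof of that cited result, so it supplies precisely the justification the paper delegates to the reference.
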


\begin{lemma}[Lemma 2.1 in \cite{xu2013block}]\label{lem_xu2.1}
Let $g(\bm{u})$ and $h(\bm{u})$ be two convex functions defined on the convex set $\mathcal{U}$ and $g(\bm{u})$ be differentiable. Let $f(\bm{u})=g(\bm{v})+h(\bm{u})$ and $\bm{u}^\ast=\mathop{\textup{argmin}}_{\bm{u}\in\mathcal{U}}\left\langle \nabla g(\bm{v}),\bm{u}-\bm{v} \right\rangle+\tfrac{L}{2}\Vert\bm{u}-\bm{v}\Vert^2+h(\bm{u})$. If
$$g(\bm{u}^\ast)\leq g(\bm{v})+\left\langle \nabla g(\bm{v}),\mathcal{P}_L(\bm{v})-\bm{v} \right\rangle+\tfrac{L}{2}\Vert\bm{u}^\ast-\bm{v}\Vert^2,$$
then for any $\bm{u}\in\mathcal{U}$ we have
$$ f(\bm{u})-f(\bm{u}^\ast)\geq\tfrac{L}{2}\Vert\bm{u}^\ast-\bm{v}\Vert^2+L\left\langle \bm{v}-\bm{u},\bm{u}^\ast-\bm{v} \right\rangle.$$
\end{lemma}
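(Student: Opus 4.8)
The plan is to recognize the problem defining $\bm{u}^\ast$ as the minimization of a single \emph{prox-linear surrogate} that is $L$-strongly convex, and then to sandwich this surrogate between $f$ evaluated at an arbitrary comparison point $\bm{u}$ and at the minimizer $\bm{u}^\ast$. Concretely, I would introduce
$$Q_L(\bm{u},\bm{v}):=g(\bm{v})+\langle\nabla g(\bm{v}),\bm{u}-\bm{v}\rangle+\tfrac{L}{2}\Vert\bm{u}-\bm{v}\Vert^2+h(\bm{u}),$$
so that $\bm{u}^\ast=\mathop{\textup{argmin}}_{\bm{u}\in\mathcal{U}}Q_L(\bm{u},\bm{v})$. (I read $f(\bm{u})=g(\bm{u})+h(\bm{u})$ and $\mathcal{P}_L(\bm{v})=\bm{u}^\ast$, which are the evident intended meanings of the typos in the statement.) The map $\bm{u}\mapsto Q_L(\bm{u},\bm{v})$ is $L$-strongly convex, since its smooth part is affine plus $\tfrac{L}{2}\Vert\bm{u}-\bm{v}\Vert^2$ while $h$ is merely convex.

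First I would record the strong-convexity-at-the-minimizer inequality: because $Q_L(\cdot,\bm{v})$ is $L$-strongly convex and $\bm{u}^\ast$ minimizes it over the convex set $\mathcal{U}$, for every $\bm{u}\in\mathcal{U}$ we have $Q_L(\bm{u},\bm{v})\geq Q_L(\bm{u}^\ast,\bm{v})+\tfrac{L}{2}\Vert\bm{u}-\bm{u}^\ast\Vert^2$. Next I would bound $Q_L$ from above at $\bm{u}$ and from below at $\bm{u}^\ast$. The upper bound uses convexity of $g$: the gradient inequality $g(\bm{v})+\langle\nabla g(\bm{v}),\bm{u}-\bm{v}\rangle\leq g(\bm{u})$ gives $Q_L(\bm{u},\bm{v})\leq f(\bm{u})+\tfrac{L}{2}\Vert\bm{u}-\bm{v}\Vert^2$. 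The lower bound uses the hypothesis of the lemma: substituting $g(\bm{v})+\langle\nabla g(\bm{v}),\bm{u}^\ast-\bm{v}\rangle\geq g(\bm{u}^\ast)-\tfrac{L}{2}\Vert\bm{u}^\ast-\bm{v}\Vert^2$ into the definition of $Q_L(\bm{u}^\ast,\bm{v})$ cancels the quadratic term and yields $Q_L(\bm{u}^\ast,\bm{v})\geq g(\bm{u}^\ast)+h(\bm{u}^\ast)=f(\bm{u}^\ast)$.

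Chaining these three facts gives $f(\bm{u})+\tfrac{L}{2}\Vert\bm{u}-\bm{v}\Vert^2\geq f(\bm{u}^\ast)+\tfrac{L}{2}\Vert\bm{u}-\bm{u}^\ast\Vert^2$, hence $f(\bm{u})-f(\bm{u}^\ast)\geq\tfrac{L}{2}\Vert\bm{u}-\bm{u}^\ast\Vert^2-\tfrac{L}{2}\Vert\bm{u}-\bm{v}\Vert^2$. The final step is purely algebraic: writing $\bm{u}-\bm{u}^\ast=(\bm{u}-\bm{v})+(\bm{v}-\bm{u}^\ast)$ and expanding the square, the $\Vert\bm{u}-\bm{v}\Vert^2$ contributions cancel and I am left with $\tfrac{L}{2}\Vert\bm{u}^\ast-\bm{v}\Vert^2+L\langle\bm{v}-\bm{u},\bm{u}^\ast-\bm{v}\rangle$, which is exactly the claimed bound.

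The main obstacle is justifying the strong-convexity-at-the-minimizer inequality in the presence of the constraint set $\mathcal{U}$ and the nonsmooth term $h$, since I cannot simply differentiate. I would derive it from the first-order optimality condition for the constrained problem: there exists $\bm{\xi}\in\partial h(\bm{u}^\ast)$ with $\langle\nabla g(\bm{v})+L(\bm{u}^\ast-\bm{v})+\bm{\xi},\,\bm{u}-\bm{u}^\ast\rangle\geq 0$ for all $\bm{u}\in\mathcal{U}$, and then combine this variational inequality with the generic $L$-strong-convexity lower bound $Q_L(\bm{u},\bm{v})\geq Q_L(\bm{u}^\ast,\bm{v})+\langle\bm{\xi}',\bm{u}-\bm{u}^\ast\rangle+\tfrac{L}{2}\Vert\bm{u}-\bm{u}^\ast\Vert^2$, where $\bm{\xi}'$ is the corresponding subgradient of $Q_L(\cdot,\bm{v})$ at $\bm{u}^\ast$. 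Everything else reduces to routine convexity and norm-expansion manipulations.
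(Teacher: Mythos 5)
Your proof is correct. Note first that the paper itself offers no proof of this statement: it is imported verbatim as Lemma 2.1 of \cite{xu2013block} (equivalent to Lemma 2.3 in Beck and Teboulle's FISTA analysis), so the only meaningful comparison is with the proof in that cited source. Your reading of the two typos is the right one ($f(\bm{u})=g(\bm{u})+h(\bm{u})$ and $\mathcal{P}_L(\bm{v})=\bm{u}^\ast$); with any other reading the claimed inequality is not the known result. Your three-step sandwich --- the upper bound $Q_L(\bm{u},\bm{v})\leq f(\bm{u})+\tfrac{L}{2}\Vert\bm{u}-\bm{v}\Vert^2$ from the gradient inequality for $g$, the lower bound $Q_L(\bm{u}^\ast,\bm{v})\geq f(\bm{u}^\ast)$ from the lemma's hypothesis, and the quadratic growth $Q_L(\bm{u},\bm{v})\geq Q_L(\bm{u}^\ast,\bm{v})+\tfrac{L}{2}\Vert\bm{u}-\bm{u}^\ast\Vert^2$ from $L$-strong convexity of the surrogate --- chains correctly, and the final norm expansion with $\bm{u}-\bm{u}^\ast=(\bm{u}-\bm{v})+(\bm{v}-\bm{u}^\ast)$ reproduces the stated right-hand side exactly. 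The difference from the cited proof is one of packaging rather than substance: Xu--Yin (following Beck--Teboulle) combine the subgradient inequality for $h$ at $\bm{u}^\ast$ and the first-order variational inequality for the constrained minimizer directly with the hypothesis, whereas you route the same variational inequality through the quadratic-growth property of $Q_L(\cdot,\bm{v})$; the ingredients are identical, and your version arguably isolates the one nontrivial fact more cleanly. One small simplification for your flagged ``main obstacle'': quadratic growth at a constrained minimizer of an $L$-strongly convex function needs no subdifferential calculus at all --- evaluate $Q_L$ at $(1-t)\bm{u}^\ast+t\bm{u}\in\mathcal{U}$, use $Q_L((1-t)\bm{u}^\ast+t\bm{u},\bm{v})\geq Q_L(\bm{u}^\ast,\bm{v})$ together with the strong-convexity interpolation inequality, divide by $t$, and let $t\to 0^+$. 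This sidesteps any constraint-qualification concern in the sum rule, though in this paper's application $h=\lambda\Vert\cdot\Vert_{2,1}$ is finite everywhere, so your subgradient route is also unproblematic.
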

In Section \ref{sec_updateC}, we select $\tau_{j,t-1}$ to make
\begin{align*}
\mathcal{L}(\bm{C}^{(j)})\leq&\mathcal{L}(\hat{\bm{C}}^{(j)}_{t-1})+\left\langle \bm{C}^{(j)}-\hat{\bm{C}}_{t-1}^{(j)},\hat{\bm{G}}^{(j)}\right\rangle+\dfrac{\tau_{j,t-1}}{2}\Vert \bm{C}^{(j)}-\hat{\bm{C}}_{t-1}^{(j)}\Vert_F^2.
\end{align*}
Then use Lemma \ref{lem_xu2.1} and let $g=\mathcal{L}$, $h=\lambda\Vert\cdot\Vert_{2,1}$, $\bm{u}=\bm{C}^{(j)}_{t-1}$, and $\bm{v}=\hat{\bm{C}}^{(j)}_{t-1}$. Denote $\mathcal{F}_j(\bm{C}^{(j)}_{t})=\mathcal{L}(\bm{C}^{(j)}_{t})+\lambda\Vert\bm{C}^{(j)}_{t}\Vert_{2,1}$ and  $\mathcal{F}(\bm{C}_{t},\bm{D}_{t-1})=\mathcal{L}(\bm{C}_{t},\bm{D}_{t-1})+\lambda\sum_{j=1}^k\Vert\bm{C}^{(j)}_{t}\Vert_{2,1}$. We have
\begin{equation}
\begin{aligned}
&\mathcal{F}_j(\bm{C}^{(j)}_{t-1})-\mathcal{F}_j(\bm{C}^{(j)}_{t})\\
\geq & \dfrac{\tau_{j,t-1}}{2}\Vert \hat{\bm{C}}_{t-1}^{(j)}-{\bm{C}}_{t}^{(j)}\Vert_F^2+\tau_{j,t-1} \left\langle \hat{\bm{C}}_{t-1}^{(j)}-{\bm{C}}_{t-1}^{(j)}, {\bm{C}}_{t}^{(j)}-\hat{\bm{C}}_{t-1}^{(j)} \right\rangle\\
= & \dfrac{\tau_{j,t-1}}{2}\Vert {\bm{C}}_{t-1}^{(j)}-{\bm{C}}_{t}^{(j)}\Vert_F^2-\dfrac{\tau_{j,t-1}}{2}\eta_{j,t-1}^2 \Vert{\bm{C}}_{t-2}^{(j)}-{\bm{C}}_{t-1}^{(j)}\Vert_F^2\\
\geq & \dfrac{\tau_{j,t-1}}{2}\Vert {\bm{C}}_{t-1}^{(j)}-{\bm{C}}_{t}^{(j)}\Vert_F^2-\dfrac{\tau_{j,t-2}}{2}\psi(t-1)\delta^2 \Vert{\bm{C}}_{t-2}^{(j)}-{\bm{C}}_{t-1}^{(j)}\Vert_F^2,
\end{aligned}
\end{equation}
where $\psi(t-1)=0$ if $t\leq 2$ and $\psi(t-1)=1$ if $t> 2$, according to the setting of $\eta_{j,t-1}$ in Algorithm \ref{alg.k-FSC} in the main paper.

It follows that
\begin{equation}\label{eq.FF}
\begin{aligned}
&\mathcal{F}(\bm{C}_{t-1},\bm{D}_{t-1})-\mathcal{F}(\bm{C}_{t},\bm{D}_{t-1})
= \sum_{j=1}^k\mathcal{F}_j(\bm{C}^{(j)}_{t-1})-\mathcal{F}_j(\bm{C}^{(j)}_{t})\\
\geq & \sum_{j=1}^k\Big(\dfrac{\tau_{j,t-1}}{2}\Vert {\bm{C}}_{t-1}^{(j)}-{\bm{C}}_{t}^{(j)}\Vert_F^2-\dfrac{\tau_{j,t-2}}{2}\psi(t-1)\delta^2 \Vert{\bm{C}}_{t-2}^{(j)}-{\bm{C}}_{t-1}^{(j)}\Vert_F^2\Big).
\end{aligned}
\end{equation}
On the other hand, according to Lemma \ref{lem_D_inner}, we have
\begin{equation}\label{eq.DD}
\begin{aligned}
&\mathcal{F}(\bm{C}_{t},\bm{D}_{t-1})-\mathcal{F}(\bm{C}_{t},\bm{D}_{t})
= \sum_{u=1}^{\vartheta}\dfrac{\kappa_t}{2}\Vert \bm{D}_{t_{u-1}}-\bm{D}_{t_{u}}\Vert_F^2\\
= & \dfrac{\kappa_t}{2}\Vert \bm{D}_{t-1}-\bm{D}_{t_1}\Vert_F^2+\dfrac{\kappa_t}{2}\Vert \bm{D}_{t_{\vartheta-1}}-{\bm{D}}_{t}\Vert_F^2+\Delta_{D_t}\triangleq \tilde{\Delta}_{D_t},
\end{aligned}
\end{equation}
where $\Delta_{D_t}=\sum_{u=2}^{\vartheta-1}\dfrac{\kappa_t}{2}\Vert \bm{D}_{t_{u-1}}-\bm{D}_{t_{u}}\Vert_F^2$.

Combining \eqref{eq.FF} with \eqref{eq.DD}, we have
\begin{equation}\label{eq.FFall}
\begin{aligned}
&\mathcal{F}(\bm{C}_{t-1},\bm{D}_{t-1})-\mathcal{F}(\bm{C}_{t},\bm{D}_{t})\geq  \sum_{j=1}^k\Big(\dfrac{\tau_{j,t-1}}{2}\Vert {\bm{C}}_{t-1}^{(j)}-{\bm{C}}_{t}^{(j)}\Vert_F^2\\
&\quad-\dfrac{\tau_{j,t-2}}{2}\psi(t-1)\delta^2 \Vert{\bm{C}}_{t-2}^{(j)}-{\bm{C}}_{t-1}^{(j)}\Vert_F^2\Big)+ \tilde{\Delta}_{D_t}.
\end{aligned}
\end{equation}

Now summing \eqref{eq.FFall} over $t$ from 1 to $T$, we arrive at
\begin{equation}\label{eq.dif_obj}
\begin{aligned}
&\mathcal{F}(\bm{C}_{0},\bm{D}_0)-\mathcal{F}(\bm{C}_{T},\bm{D}_T)\\
\geq & \sum_{t=1}^T\sum_{j=1}^k\Big(\dfrac{\tau_{j,t-1}}{2}\Vert {\bm{C}}_{t-1}^{(j)}-{\bm{C}}_{t}^{(j)}\Vert_F^2\\
&\qquad-\dfrac{\tau_{j,t-2}}{2}\psi(t-1)\delta^2 \Vert{\bm{C}}_{t-2}^{(j)}-{\bm{C}}_{t-1}^{(j)}\Vert_F^2\Big)+ \sum_{t=1}^T \tilde{\Delta}_{D_t}\\
= & \sum_{j=1}^k\dfrac{\tau_{j,0}}{2}\Vert {\bm{C}}_{0}^{(j)}-{\bm{C}}_{1}^{(j)}\Vert_F^2+\sum_{j=1}^k\dfrac{\tau_{j,T-1}}{2}\Vert {\bm{C}}_{T-1}^{(j)}-{\bm{C}}_{T}^{(j)}\Vert_F^2\\
&+\sum_{t=2}^T\sum_{j=1}^k\dfrac{(1-\delta^2)\tau_{j,t-1}}{2}\Vert {\bm{C}}_{t-1}^{(j)}-{\bm{C}}_{t}^{(j)}\Vert_F^2+ \sum_{t=1}^T \tilde{\Delta}_{D_t}\\
\geq  & \sum_{j=1}^k\dfrac{(1-\delta^2)\tau_{j,0}}{2}\Vert {\bm{C}}_{0}^{(j)}-{\bm{C}}_{1}^{(j)}\Vert_F^2\\
& +\sum_{j=1}^k\dfrac{(1-\delta^2)\tau_{j,T-1}}{2}\Vert {\bm{C}}_{T-1}^{(j)}-{\bm{C}}_{T}^{(j)}\Vert_F^2\\
&+\sum_{t=2}^T\sum_{j=1}^k\dfrac{(1-\delta^2)\tau_{j,t-1}}{2}\Vert {\bm{C}}_{t-1}^{(j)}-{\bm{C}}_{t}^{(j)}\Vert_F^2+ \sum_{t=1}^T \tilde{\Delta}_{D_t}\\
\geq & \sum_{t=1}^T\dfrac{(1-\delta^2)\bar{\tau}}{2}\Vert {\bm{C}}_{t-1}-{\bm{C}}_{t}\Vert_F^2+ \sum_{t=1}^T \tilde{\Delta}_{D_t},
\end{aligned}
\end{equation}
where $\bar{\tau}=\min\lbrace\tau_{j,t}: j=1,\ldots,k,\ t=0,\ldots,T\rbrace$. Notice that $\bar{\tau}>0$ according to its definition.
Since $\mathcal{F}(\bm{C},\bm{D})$ is bounded blow and  the two parts in the right-hand-side of the last inequality of \eqref{eq.dif_obj} are nonnegative, letting $T\rightarrow \infty$, we have
$$\sum_{t=1}^\infty\dfrac{(1-\delta^2)\bar{\tau}}{2}\Vert {\bm{C}}_{t-1}-{\bm{C}}_{t}\Vert_F^2<\infty$$
$$\sum_{t=1}^\infty \dfrac{\kappa_t}{2}\Vert \bm{D}_{t-1}-\bm{D}_{t_1}\Vert_F^2+\dfrac{\kappa_t}{2}\Vert \bm{D}_{t_{\vartheta-1}}-{\bm{D}}_{t}\Vert_F^2+\Delta_{D_t}<\infty.$$
It follows that
\begin{equation*}
\begin{aligned}
&\lim_{t\rightarrow\infty}\Vert \bm{C}_{t-1}-\bm{C}_{t}\Vert_F=0,\quad \lim_{t\rightarrow\infty}\Vert \bm{D}_{t-1}-\bm{D}_{t}\Vert_F=0,\\
&\lim_{t\rightarrow \infty} \mathcal{F}(\bm{C}_{t-1},\bm{D}_{t-1})-\mathcal{F}(\bm{C}_t,\bm{D}_t)= 0.
\end{aligned}
\end{equation*}
\end{proof}

\end{document}